\definecolor{gray}{HTML}{545454}
\icmltitlerunning{An Auction-based Marketplace for Model Trading in Federated Learning}
\newcommand{\ie}{\emph{i.e., }}
\newcommand{\eg}{\emph{e.g., }}
\newcommand{\st}{\emph{s.t. }}
\newcommand{\wrt}{\emph{w.r.t. }}
\newcommand{\real}{\mathbb R}
\newcommand{\mcM}{\mathcal{M}}
\newcommand{\mba}{\mathbf{a}}
\newcommand{\mbb}{\mathbf{b}}
\newcommand{\mbp}{\mathbf{p}}
\newcommand{\mbs}{\mathbf{s}}
\newcommand{\mbA}{\mathbf{A}}
\newtheorem{theorem}{Theorem}[section]
\newtheorem{definition}{Definition}[section]
\newtheorem{assumption}{Assumption}[section]
\newtheorem*{theorem*}{Theorem}
\newtheorem*{definition*}{Definition}
\newtheorem*{assumption*}{Assumption}
\newtheorem*{conjecture*}{Conjecture}
\newtheorem*{claim*}{Claim}
\newtheorem*{lemma*}{Lemma}
\newtheorem*{proposition*}{Proposition}
\newtheorem*{property*}{Property}
\newtheorem*{fact*}{Fact}
\newtheorem*{corollary*}{Corollary}
\newtheorem*{example*}{Example}
\newtheorem*{remark*}{Remark}
\newtheorem*{exercise*}{Exercise}
\begin{document}

\twocolumn[
\icmltitle{An Auction-based Marketplace \\ for Model Trading in Federated Learning}




\begin{icmlauthorlist}
\icmlauthor{Yue Cui}{hkust}
\icmlauthor{Liuyi Yao}{alibaba1}
\icmlauthor{Yaliang Li}{alibaba2}
\icmlauthor{Ziqian Chen}{alibaba1}
\icmlauthor{Bolin Ding}{alibaba2}
\icmlauthor{Xiaofang Zhou}{hkust}

\end{icmlauthorlist}

\icmlaffiliation{hkust}{The Hong Kong University of Science and Technology, Hong Kong SAR}
\icmlaffiliation{alibaba1}{Alibaba Group, Hangzhou, China}
\icmlaffiliation{alibaba2}{Alibaba Group, Seattle, USA}

\icmlcorrespondingauthor{Yue Cui}{ycuias@cse.ust.hk}
\icmlcorrespondingauthor{Liuyi Yao}{yly287738@alibaba-inc.com}
\icmlcorrespondingauthor{Yaliang Li}{yaliang.li@alibaba-inc.com}
\icmlcorrespondingauthor{Ziqian Chen}{eric.czq@alibaba-inc.com}
\icmlcorrespondingauthor{Bolin Ding}{bolin.ding@alibaba-inc.com}
\icmlcorrespondingauthor{Xiaofang Zhou}{zxf@cse.ust.hk}
\icmlkeywords{Machine Learning, ICML}

\vskip 0.3in
]



\printAffiliationsAndNotice 

\begin{abstract}
Federated learning (FL) is increasingly recognized for its efficacy in training models using locally distributed data. However, the proper valuation of shared data in this collaborative process remains insufficiently addressed. In this work, we frame FL as a marketplace of models, where clients act as both buyers and sellers, engaging in model trading. This FL market allows clients to gain monetary reward by selling their own models and improve local model performance through the purchase of others' models. We propose an auction-based solution to ensure proper pricing based on performance gain. Incentive mechanisms are designed to encourage clients to truthfully reveal their model valuations. Furthermore, we introduce a reinforcement learning (RL) framework for marketing operations, aiming to achieve maximum trading volumes under the dynamic and evolving market status. Experimental results on four datasets demonstrate that the proposed FL market can achieve high trading revenue and fair downstream task accuracy.
\end{abstract}


\section{Introduction}

Federated learning (FL) has gained significant attention in recent years as it enables collaborative training of machine learning models from distributed data sources without sharing private information \citep{mcmahan2017communication,konevcny2016federated,yang2019federated,yang2019federated_app,hard2018federated,xu2021federated, leroy2019federated}. FL can be categorized based on participant types into cross-device and cross-silo settings \cite{yang2019federated}. In this study, we specifically focus on cross-silo FL, where participants typically include institutions with moderate to large datasets (\eg hospitals, banks), and the FL model(s) are trained for local use by these participants. 

FL can be conceptualized as a process wherein data is shared, exchanged, and reused through communication rounds. The current model sharing scheme in FL is highly egalitarianism: each client altruistically contributes model updates to the central server \cite{mcmahan2017communication,yang2019federated}, which are subsequently aggregated to form global or personalized models accessible to other participating clients \cite{li2022federated,mcmahan2017communication,yang2019federated,li2021ditto,li2021fedbn}. This can lead to discouraging outcomes to certain clients. For example, a client who already has high-quality local data and thus commendable local model performance may experience marginal performance improvements but enhances the model performance of other clients, potentially including its competitors, thereby diminishing its own market share.

\begin{figure*}[t] 
\vspace{-0.15cm}
\centering
\includegraphics[width=0.95\linewidth]{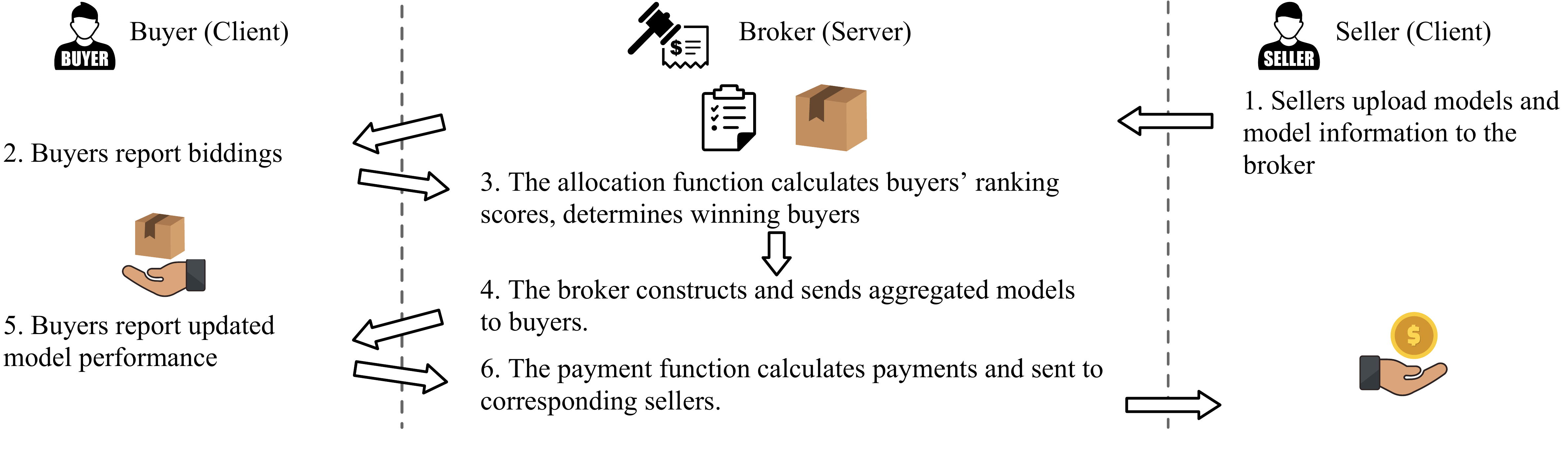}
\vspace{-0.35cm}
\caption{Pipeline of the auction-based FL market.}
\vspace{-0.35cm}
\label{fig:pip_auc}
\end{figure*}

To mitigate this issue, we suggest integrating model valuation into FL. Under this framework, client model updates are no longer freely shared; instead, a client is required to offer monetary compensation to the provider of the update it intends to use. Clients take on dual roles as both model sellers and buyers: as sellers, they offer their local models in exchange for monetary rewards, while as buyers, they seek to acquire models from other clients to enhance their own through model aggregation. This framework effectively establishes an FL model market and each FL communication round is driven by a round of model trading.

Designing an FL model marketplace is challenging. Firstly, given the competitive nature among buyers, a mechanism that ensures buyers with higher willingness to pay are more likely to acquire the goods needs to be devised. Secondly, determining the appropriate pricing for model products is complex for sellers, as the utility of a model depends on the production environment of its user—the model buyers in the FL market. For example, 1\% performance increase for company A leads to revenue increase of \$1M but can only produces \$1,000 for company B. Thirdly, from the perspective of the market platform, maintaining high trading volumes is essential to signal market sentiment and attract more investments. This, however, could be challenging due to the dynamic and evolving market status during FL communication rounds.

To address the first two challenges, auction mechanisms, which have been proven to be successful in the online advertising \citep{zhang2021optimizing,liu2021neural}, offer a potential solution. Ad systems share similarities with the FL environment: advertisers vie for ad slots on the platform, and the pricing of an ad slot is based on user response after display. In the context of FL, an auction mechanism could function by allowing model sellers to offer their models for bidding, with the bidding price designed as a function of future performance gain. Specifically, we propose the design of bidding and payment rules for the FL market, where bidding values represent the highest price a client is willing to pay for a unit of performance increase. Bidding results are determined by the allocation function, and successful bidders aggregate the models they win to build the delivery. The proposed auction pipeline is shown in Figure~\ref{fig:pip_auc}.


To tackle the third challenge, we establish maximizing trading volumes as the primary objective of the marketplace. We propose two strategies, each supported by theoretical proofs, to advance this objective: 1) limiting the number of model copies a client can sell to be smaller than the total number of buyer clients, and 2) in each auction round, having the server randomly authorize a subset of sellers for sale. This design incentivise the true valuation revealing of buyers, which is essential for an effective auction mechanism \cite{krishna2009auction,li2020multi}. Subsequently, we introduce a reinforcement learning (RL)-based solver to optimize the allocation function mentioned earlier, aiming to achieve maximum trading volume while accommodating the dynamic and evolving nature of the market status.

Our contributions can be summarized as follows.

$\bullet$  We revisit model exchanging in FL from the perspective of economic trading. We treat each client's local model as an asset. A client should be paid with monetary rewards if its model is used for FL aggregation. 

$\bullet$ We introduce a market design based on auction, with performance gain based pricing for bidding and payment. Two theoretical principles are proposed to encourage true valuation revealing of buyer clients. 

$\bullet$ An RL based marketing operation solver is proposed to maximize the revenue generated by the marketplace with certain properties satisfied.

$\bullet$ Extensive experiments demonstrate the proposed approach incentivizes high trading volume with compatible model accuracy compared with baselines.


\section{Preliminary}
\label{sec:pre}


\textbf{Ingredients of the Auction-based FL Market.} The auction based FL market contains three key roles: model buyer, model seller, and broker.

Each \textit{client} in the federation can act as a \textbf{model seller}. A seller sells its model parameters or gradients, denoted as $A_i^t$, to generate income. $i$ denotes client $i$ and $t$ denotes the current round of auction. 

Each \textit{client} in the federation can also act as a \textbf{model buyer}. A buyer obtains aggregated model by paying monetary rewards to the corresponding model sellers via bidding. Each model buyer announces a bidding vector $\mbb_i^t$ to express its willingness to buy models on the market. Successfully bidder will receive model asset based on FL aggregation (see Section \ref{sec:goods}). Each buyer operates under a budget constraint (defined in Assumption \ref{ass:ir}).

The \textit{central server} serves as the \textbf{broker}. The broker receives bidding values, bidding-related information announced by model buyers, and goods-related information sent by model sellers. It is responsible for calculating the auction result and payment, as well as performing model aggregation.

The broker plays a crucial role in FL market. It prevents the risk of information leakage across clients, ensuring that a client's model/gradient is not directly accessible to other clients. It also determines an optimal mechanism towards the data markets. 

Based on the above clarifications, the problem studied in this paper can be described as follows.

\textbf{Problem Statement}. The design of an auction game usually includes characterizing of the \textit{allocation function} and \textit{payment function} \cite{krishna2009auction}. In this paper, we take the view of a trading platform, \ie on the server side, to design the allocation and payment functions that attracts as high trading volume as possible.

The definitions of allocation and payment functions are as follows. And the definition of trading volumes is deferred to section \ref{eq:obj}
.
\begin{definition}\textbf{Allocation Function}:
Denoted as $f_a$, an allocation function $f_a: (\mbb^t,D^t)\rightarrow \real^{N\times N}$, takes biding $\mbb^t$ and other possible information $D^t$, \eg the model parameters of clients $\mbA^t$, at round $t$ as input, outputs the ranking scores of bidders and determines the auction results. Denote the ranking score as $\mba^t$. 
\end{definition}

\begin{definition}\textbf{Payment Function}:
Denoted as $f_p$, the payment function $f_p: (\mbb^t, \mba^t)\rightarrow \real^{N\times N}$ outputs the price each bidder should pay. Denote the price as $\mbp^t$. 
\end{definition}
Note that although not necessarily all clients participate as model sellers or buyers, we define $\mba^t \in \real^{N\times N}$ and $\mba^t \in \real^{N\times N}$, where the corresponding values can be masked or set as zeros if not available in a round. For instance, $p^t_{i,j}=0$ if the bidding from client $i$ to client $j$'s model is unsuccessful. The allocation and payment functions are deployed and maintained on the server side. Key notations used in this paper are summarized in Appendix \ref{app:notation}.


\label{sec:app:notation}



\section{Methodology}
\label{sec:method}
In this section, we introduce details of the proposed FL market. We begin by establishing a connection between performance gain and monetary revenue, formulating how to price a model based on the performance improvement it provides to buyer clients.
\subsection{Market Design}
\label{sec:design}
\textbf{Performance Gain Based Pricing.}
\label{sec:performance_revenue}
A primary motivation for a client to participate in FL is to enhance its model performance. Performance gain, defined as the relative improvement in a client's performance before and after participating in FL, has been demonstrated to be an effective metric for quantifying this benefit \citep{yao2022federated}. The performance gain can be formulated as follows.

\begin{definition}\textbf{Performance Gain}:
Denote the initial performance of a model on client $i$ as $\mcM^0_i$, which could be the performance of isolated training \citep{yao2022federated} setting, and the performance after $t$ rounds of auction as $\mcM^t_i$. The performance gain is calculated as:
\begin{equation}
G_i^t=(\mcM^t_i-\mcM^0_i)/\mcM^0_i,
\label{eq:gain}
\end{equation}
where $G_i^0=0$ and we assume $\mcM^0_i$ is non-zero. Further, we can define the difference in performance gain, \ie 
\begin{equation}
\Delta G_i^t=G_i^t-G_i^{t-1},
\end{equation}
where $ t\geq 1$.
\end{definition}
As $\Delta G$ is more frequently used in the paper, we refer $\Delta G$ as the performance gain for short.

The bidding price connects buyers' willingness to pay and the value of the product. In the context of the FL market, this implies that a buyer should increase its bidding price to achieve a higher performance gain. Similarly, a seller providing more performance gain to a buyer should receive more monetary rewards. To establish this connection, we propose using unit performance gain-based bidding. As multiple sellers participate in a round of auction and a buyer can obtain products from more than one seller, the bidding price is defined as a multi-dimensional vector:

\begin{definition} \textbf{Bidding Price}:
In round $t$, model buyer $i$ reports $\mbb_i^t=[b_{i,1}^t,...,b_{i,N}^t]$, where $b_{i,j}^t$ denotes the bidding value for seller $j$'s model, representing the highest price that the buyer is willing to pay for a unit of performance increase if it could get the model. 
\end{definition}


\begin{definition} \textbf{Payment Rule}:
If buyer $i$ successfully bids seller $j$'s model at round t, the payment function calculates payment $p_{i,j}^t$, then buyer $i$ pays $p_{i,j}^t\Delta G_i^t$ to seller $j$.
\end{definition}

The bidding price and payment rule designed above serves as the buyer valuation realization in this paper. It is based on the central idea that the value of a client's model should not be considered in isolation but should be related to the performance gain it provides to the buyer clients.

\textbf{FL Market Assumptions.}
\label{sec:assumption}
With the definitions provided, we make the following assumptions to simplify the problem. 

\begin{assumption} \textbf{Individual Rationality.}
 1) No gain, no pay. A client will not pay in a round if it experiences a worse outcome than the previous round of auction, \ie if $\Delta G_i^t\leq0$, client buyer $i$ does not need to pay. 2) Budget constraint. Denote the utility of a unit performance increase of buyer $i$ as $u_i$, each client can pay up to $u_i$ for a unit performance increase, \ie $\sum_{j=1}^N b_{i,j}^t\leq u_i$.
\label{ass:ir}
\end{assumption}

\begin{assumption} \textbf{Unified Bidding Strategy.} All clients follow the same bidding strategy.
\label{ass:uni}
\end{assumption}

Assumption \ref{ass:ir}.1 is based on the idea that a client should not be worse off than if it leaves the federation \citep{cong2020vcg}, while Assumption \ref{ass:ir}.2 is founded on the concept that the utility of performance increase is limited. Note that $u_i$ is maintained by each client itself but not be made public. 
Moreover, the unified bidding strategy in Assumption \ref{ass:uni} is commonly employed in auction game design \citep{krishna2009auction,myerson1981optimal} to simplify the problem setting so as to analyze principles of the designed market.

As security issue is out the scope of this paper, for the setting of FL environment, we follow the commonly used no-adversary setting \cite{yang2019federated,mcmahan2017communication,Tang21cross_silo,li2020federated,li2021fedbn,incentive2022survey}, where the local model weights clients submitted to the central server are not manipulated and there is no collusion among clients.



\textbf{Objectives of the FL Market.}
\label{sec:obj}
The second-price auction is an elegant and classical mechanism in which each bidder will turn to a truthful bidding strategy, which is a dominant strategy if bidders realize their valuation in a single-item auction \cite{myerson1981optimal}. Other mechanisms, such as first-price auction, can result in bid shading where bidders will bid lower than their actual valuation for the items. In each FL round, buyers cannot know in advance about the value of models they will obtain in subsequent rounds. Instead, their bidding strategies are based on predicted model performance utility increases, which complicates the analysis of client behaviors with bid shading considered. For simplicity, we develop the FL market with the second-price auction concept. Our focus then shifts to addressing FL-specific challenges, such as the dynamic system and the unknown valuations. Based on this, we can now formulate the objective of FL market. 

Given a set of clients $\{c_i,...,c_N\}$, a bidding strategy, for a $T$ rounds auction, an FL market aims to achieve \textbf{trading volume maximization}. More specifically, we seek to maximize the cumulative revenue of model trading. Similar to traditional marketplace where the trading volume is defined as the number of shares traded multiplies price per share, in FL market, the objective of the market can be mathematically calculated as
\begin{equation}
\begin{aligned}
\max R&=\max \sum_{t=0}^T \sum_{i=1}^{N} (\sum_{j=1}^{N} x_{j,i}^t p_{j,i}^t \Delta G_j^t);\\
\st  & \text{social fairness constraint},\\
 &\text{efficient allocation constraint},
\end{aligned} 
\label{eq:obj}
\end{equation}
where the indicator of winning state is represented by $x_{i,j}^t$, such that $x_{i,j}^t=1$ if buyer $i$ successfully bids on seller $j$'s model at round $t$; otherwise, it is 0.

When maximizing the total value generated by the marketplace, we also aim to achieve certain desired properties: the social fairness constraint and efficient allocation constraint. For the \textbf{social fairness constraint}, we define it as the revenue gained by a client should not be lower than a portion of a benchmark revenue, \ie
	\begin{equation}  
	R^t_i\geq \eta R_0\text{, for $\forall$ i},
	\end{equation}
where $R^t_i=\sum_{j=1}^{N} x_{j,i}^t p_{j,i}^t \Delta G_j^t$, $\eta$ and $R_0$ are manually set hyper-parameters. The other constraint, \textbf{efficient allocation constraint}, aims at assigning the auctioned items to the bidders who value them the most, thereby maximizing the total value from the allocation. Inspired by \cite{myerson1981optimal}, we primarily consider two sub-constraints: 1) monotone allocation, wherein the winning bidder would still win the auction if it reports a higher bid, and 2) critical bid-based pricing, where the pricing rule is based on the critical bid, \ie the second-highest bid that would not lose the auction.

\subsection{Principles to Incentives True Valuation Revealing}
\label{sec:principle}
It has been proven that in auction design, creating an environment where bidders are incentivized to reveal their truth valuations (\ie the bidding value equals the buyer's utility on the product) eliminates the need to account for strategic behaviors \citep{zhang2021optimizing}, resulting in reliable and predictable inputs for objective optimization. 


Note that during FL course, the identification of the model buyers and sellers will not change, which means each model buyer will try to predict its monetary utility of a unit performance increase $u_i^t(M_j^{t-1},A_i^{t-1})$ based on the model information $M_j^{t-1}$ from the seller $j$ and its own current model weights $A_i^{t-1}$. In simplicity, we use the term $X_{i,j}^{t-1}$ to represent $(M_j^{t-1},A_i^{t-1})$ and assume $u_i^t(M_j^{t-1},A_i^{t-1})\leq u_i$, where $u_i$ is the objective utility gain for a unit of performance increase as defined in Assumption \ref{ass:ir}. As a consequence, in our case, bidding of buyer $i$ towards seller $j$'s model can be reformulated as $ b_{i,j}^t=u_i^t(X_{i,j}^{t-1})$. 

In the auction-based FL market, each seller can be seen as an auctioneer selling a number of identical objects. However, although digital products such as models have zero marginal costs \citep{pei2020survey}, arbitrarily setting the number of model copies for sale can be problematic. We propose the following theorem.

\begin{theorem} 
When each model seller has no limits on the number of the sold model copies, no model buyer will have the incentive to submit a bid equaling its predicted monetary utility unless there exists a constraint on the number of model copies:
\begin{equation}
\left\{
\begin{aligned}
&b_{i,j}^t < u_i^t(X_{i,j}^{t-1}) , \quad \text{if} \quad k = N;   \\
&b_{i,j}^t \leq u_i^t(X_{i,j}^{t-1}), \quad \text{if}\quad k < N.
\end{aligned}
\right .
\end{equation}

\label{th:copy}
\end{theorem}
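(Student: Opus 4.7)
The plan is to interpret each seller $j$'s mini-market as a multi-unit sealed-bid Vickrey auction with $k$ identical copies, operating under the critical-bid pricing rule announced in Section \ref{sec:obj}. Under this interpretation, the top $k$ bids on seller $j$'s model win, and every winner pays the $(k+1)$-th highest bid (zero when no such losing bid exists). I would then prove the two declared regimes separately.

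For $k < N$, I would apply the classical dominant-strategy argument for Vickrey auctions. Fixing buyer $i$ and holding the remaining bids constant, any deviation to $b > u_i^t(X_{i,j}^{t-1})$ can only alter the outcome by making $i$ win in a state where the resulting critical price exceeds $u_i^t(X_{i,j}^{t-1})$, which by Assumption \ref{ass:ir}.1 yields negative realized surplus and is therefore irrational. Hence $b_{i,j}^t \leq u_i^t(X_{i,j}^{t-1})$ follows, matching the claimed weak inequality.

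For $k = N$, the decisive observation is that, with as many copies as potential buyers, the top-$k$ set coincides with the full bidder set independently of the reported values. The $(k+1)$-th order statistic therefore does not exist and critical-bid pricing forces $p_{i,j}^t = 0$ for every winner. Consequently buyer $i$'s realized surplus equals $u_i^t(X_{i,j}^{t-1})\,\Delta G_i^t$, a quantity that does not depend on $b_{i,j}^t$, so truthful bidding is payoff-equivalent to any strictly smaller positive bid. No strict incentive to report $u_i^t(X_{i,j}^{t-1})$ exists, and a rational buyer, which has competing uses for its bidding capacity (the per-round constraint $\sum_j b_{i,j}^t \leq u_i$ of Assumption \ref{ass:ir}.2) and an interest in not revealing $u_i^t(X_{i,j}^{t-1})$ to the broker, strictly prefers $b_{i,j}^t < u_i^t(X_{i,j}^{t-1})$.

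The hard part is precisely this final strict inequality, since the no-competition argument by itself only delivers weak indifference across all bids in $(0, u_i^t(X_{i,j}^{t-1})]$. I would close the gap via the budget-reallocation channel: because Assumption \ref{ass:ir}.2 couples the bids in round $t$ through $\sum_j b_{i,j}^t \leq u_i$, any mass placed on a seller $j$ with $k = N$ is wasted in the sense that it crowds out bids on sellers $j'$ where $k < N$ and competition is genuine. Under the unified bidding rule of Assumption \ref{ass:uni}, the rational strategy therefore shades $b_{i,j}^t$ strictly below $u_i^t(X_{i,j}^{t-1})$, converting the weak dominance into the strict separation demanded by the theorem and simultaneously motivating the copy-number restriction that Section \ref{sec:principle} goes on to impose.
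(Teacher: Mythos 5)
Your $k<N$ half is fine: the standard Vickrey weak-dominance argument delivers $b_{i,j}^t \leq u_i^t(X_{i,j}^{t-1})$ and is in the same spirit as (if cleaner than) the paper's argument that competition for $k$ slots pushes bids up to at most the predicted utility. The problem is the $k=N$ half, which is where the theorem's content actually lives. Having adopted critical-bid pricing, you correctly conclude that when every bidder wins the price is zero and the buyer's surplus is independent of $b_{i,j}^t$ --- but that gives only indifference over all bids in $(0, u_i^t(X_{i,j}^{t-1})]$, not the strict inequality the theorem asserts. Your attempt to close the gap via budget reallocation fails in the regime the theorem is about: the hypothesis is that \emph{every} seller is unlimited ($k=N$ uniformly, and the paper explicitly assumes all sellers share the same $k$), so there is no competitive seller $j'$ with $k<N$ to which the freed budget could be profitably redirected, and the crowding-out channel delivers nothing. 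The appeal to the buyer's ``interest in not revealing $u_i^t$ to the broker'' is likewise not a cost that appears anywhere in the paper's utility model, so it cannot carry a strict preference.

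The paper's own proof avoids this by (implicitly) evaluating the buyer's surplus as $u_i^t(X_{i,j}^{t-1}) - b_{i,j}^t$, i.e., a pay-your-bid flavor in which surplus is strictly decreasing in the buyer's own bid: once winning is guaranteed for any positive bid, the buyer bids an arbitrarily small $\tau$, and the strict inequality is immediate. It also front-loads an exploration assumption ($|X_j^t - X_j^{t-1}| < \epsilon$) so that the buyer knows which sellers it values, and handles $k<N$ by observing that to be among the $k$ winners a buyer will raise $\tau$ up to at most its utility. So your route is genuinely different in its pricing model, and that choice is precisely what breaks the strict inequality: to repair your proof you either need to adopt a payment rule under which the bid itself is payoff-relevant when winning is certain, or exhibit some other strictly positive cost of bidding high that is actually present in the paper's assumptions.
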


The proof of Theorem \ref{th:copy} is deferred to Appendix \ref{app:proof1}. Based on the theorem, denote the number of model copies sold by a seller client as $k$, we set $k<N$. For simplicity, we assume all sellers share the same value of $k$. This means only buyers with top-k allocation score $a_{i,j}^t$ win the auction \wrt seller $j$'s model. However, with $k<N$ applied, a competitive environment is established. Due to the individual rationality of clients, they seek to operate with the optimal bidding strategy. In that case, it can be problematic if all sellers are authorized to sell at each round. To deal with this, we propose the next theorem. 
\begin{theorem}
Suppose the clients authorized as model sellers at each round are determined by a selective mechanism on the server. If all clients are authorized to sell at each round, model buyers will not bid truthfully. But there exists a selection mechanism $S$ that can urge a sufficient number of model buyers to bid truthfully. 
\label{th:curse}
\end{theorem}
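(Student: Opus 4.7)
The plan is to leverage Assumption~\ref{ass:ir}.2's budget constraint $\sum_{j=1}^N b_{i,j}^t \leq u_i$ together with the valuation bound $u_i^t(X_{i,j}^{t-1}) \leq u_i$ stated just before Theorem~\ref{th:copy}. The two halves of the theorem are essentially dual statements about whether the truthful bid profile fits inside the budget: when every seller is authorized, the profile overflows; when a well-chosen subset is authorized, the profile fits.

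For the negative direction, I would fix an arbitrary buyer $i$ and argue by contradiction. Suppose $i$ bids truthfully on every authorized seller, i.e., $b_{i,j}^t = u_i^t(X_{i,j}^{t-1})$ for all $j \in [N]$. The budget constraint then forces $\sum_{j=1}^N u_i^t(X_{i,j}^{t-1}) \leq u_i$. Since each summand is positive and the bound $u_i^t(X_{i,j}^{t-1}) \leq u_i$ is generically attained up to a constant factor whenever the seller's model is useful, any non-degenerate scenario with several valuable sellers violates this inequality (for instance, whenever two or more sellers satisfy $u_i^t(X_{i,j}^{t-1}) > u_i/2$). Hence at least one bid must be shaded, contradicting truthful reporting.

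For the existence direction, I would construct $S$ explicitly as a randomized subset-selection rule: at round $t$, $S$ draws a subset $S_t \subseteq [N]$ of size $m$ uniformly at random and only authorizes sellers in $S_t$. Parameterized by $m$, the number of buyers whose truthful bid profile on $S_t$ is budget-feasible is monotonically non-decreasing as $m$ shrinks; at the base case $m = 1$, every authorized buyer is trivially feasible because $u_i^t(X_{i,j}^{t-1}) \leq u_i$. Hence there exists $m^\star < N$ (indeed $m^\star = 1$ suffices as a witness) under which a sufficient number of buyers can afford to submit their predicted utilities. Combining with Theorem~\ref{th:copy} (which supplies the $k < N$ copy limit and so makes the auction competitive), and with the second-price payment rule described in Section~\ref{sec:obj}, truthful bidding becomes the dominant strategy for these budget-feasible buyers by the classical single-item second-price argument applied to each authorized seller separately.

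The main obstacle I anticipate is formalizing the phrase ``sufficient number'' without imposing strong distributional assumptions on the predicted utilities $u_i^t$. The cleanest route is to parameterize existence by $m$ and rely on the monotonicity statement together with the trivial base case $m=1$, rather than chasing a sharp threshold that would require knowing the joint distribution of buyer valuations. A secondary subtlety is bridging budget feasibility and incentive compatibility: feasibility alone does not imply truthful bidding is best; here Assumption~\ref{ass:uni} (unified bidding strategy) together with the second-price structure closes the gap, since once the budget is slack, overbidding risks paying more than one's valuation while underbidding risks losing profitable items, and the standard dominant-strategy argument goes through for each authorized seller.
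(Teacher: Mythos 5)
Your proposal takes a genuinely different route from the paper's proof, and both of its directions have gaps that the paper's (informal) argument avoids by reasoning about information and predictability across rounds rather than about budget arithmetic.

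For the negative direction, your contradiction only fires when the truthful profile overflows the budget, i.e., when $\sum_{j=1}^N u_i^t(X_{i,j}^{t-1}) > u_i$. Nothing in Assumption~\ref{ass:ir} or in the bound $u_i^t(X_{i,j}^{t-1}) \leq u_i$ guarantees this. In fact, the scenario the paper leans on in the proof of Theorem~\ref{th:copy} is one where a buyer derives positive utility from a \emph{single} seller $\hat{j}$ and zero from all others; there the truthful profile is trivially budget-feasible even with all $N$ sellers authorized, and your contradiction never arises. The paper instead argues that untruthfulness comes from predictability: when every model is guaranteed to be on sale every round, buyers learn which models help them, lock onto those stable targets, and shade their bids toward zero because the fully observed, persistent supply removes any urgency to bid high. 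That argument covers the single-valuable-seller case your budget argument misses.

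For the positive direction, budget feasibility at $m=1$ does not deliver incentive compatibility, and the classical per-seller second-price dominant-strategy argument you invoke does not directly apply: the allocation is a learned ranking score $\mba^t=\mbb^t\circ\pi(\mbs^t;\theta)$ with a critical-bid-style payment, not a Vickrey auction run independently per seller, and the deviation the theorem actually worries about is intertemporal --- a buyer who knows its target model will be offered again next round can profitably underbid now and wait. The paper's random selection mechanism $S_m$ works precisely by destroying that forecast: since a buyer cannot predict whether a desired model will ever be authorized again, it is pushed to bid its full predicted utility whenever the model does appear. To salvage your route you would need an explicit assumption that several sellers are simultaneously valuable to each buyer (for the negative direction) and an explicit treatment of the wait-for-a-later-round deviation (for the positive direction).
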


As a sketch, Theorem \ref{th:curse} holds because for a model buyer, the allocation is partially observed, it does not have the expectation that a target model will be available in future rounds and thus will efficiently participate in each round of auction. Detailed proof is deferred to Appendix \ref{app:proof2}. Therefore, together with Theorem \ref{th:copy}, we adopt a simple yet effective mechanism to incentivize truthful bidding: at an auction round, only $k$ $(k<N)$ copies of a model are allowed to be sold and only a random subset of $m$ $(m<N)$ sellers' models are authorized for selling. This ensures true valution revealing of buyers.

\subsection{Reinforcement Learning Based Auction Allocation}
\label{sec:solver}
In auction settings, the term `allocation' denotes the assignment of items to bidders based on auction results. This process involves establishing who receives which goods at what price. Addressing the challenge of fulfilling specific allocation objectives necessitates an optimized decision-making strategy. To this end, we propose a RL based framework for effective allocation. The RL algorithm is tasked with allocating models to purchasers in accordance with predetermined objectives and constraints, as detailed in Equation \ref{eq:obj}. Given the inherent complexities within round-based auctions in FL scenarios—where model availability and buyer valuation are dynamic—the adaptive capabilities of RL are particularly pertinent. RL's suitability stems from its design for environments where agents must make sequential decisions that influence subsequent choices.

In RL, the agent iteratively interacts with an evolving environment. It observes the current environmental state and performs actions that prompt state transitions, thereby receiving feedback in the form of rewards. The agent's overarching goal is to develop a policy that maximizes aggregate rewards over time, synonymous with optimizing the value function. Within the FL marketplace, the platform acts as the RL agent, with auction allocations constituting its actions, and the current market conditions forming the state. Subsequent sections will delineate the specific definitions of the state, actions, environment updates, rewards, and value within this framework.

$ \bullet$  State. State of an FL market includes the reported information, \ie biddings of buyers, the model performance of sellers, and model weights/gradients of sellers.

$ \bullet$ Action. Action is defined as the output of allocation function, $\{\mba^t_1,...,\mba^t_{N}\}$, where $\mba^t_i$ is the ranking score of buyers \wrt the bidding successful probability towards seller $i$'s model.

$ \bullet$ Environment Update. After determining the actions, for each seller, the broker marks top-k buyers as winners for each buyer. For each buyer, the broker aggregates successful bids, and then sends the aggregated model to the buyer. The buyer performs evaluation and local updates based on the obtained model and reports corresponding information to the broker. 

$ \bullet$ Reward. Once actions are taken and the environment update is observed, we calculate the reward across all clients, denoted as $r^t$, as follows: 
\begin{equation}
r^t=\sum_{i=1}^N R_i^t-\max(0, \eta R_0-R_i^t).
\label{eq:rs}	
\end{equation}
The first term is the objective in Equation~\ref{eq:obj}. The second term adapts reward shaping mechanism \citep{zhang2021optimizing,karimireddy2022mechanisms} to achieve the social fairness goal, suggesting that if an individual's income is lower than a threshold $\eta R_0$, a penalty will be made.
 
$ \bullet$ Value. Value is calculated as cumulative future reward, \ie $y_t=\sum_{\tau=t}^T r_\tau.$


 

 

\textbf{Auction Result Calculation.}
We adopt the policy gradient based deep reinforcement learning approach advance actor critic (A2C) \citep{mnih2016asynchronous}. The A2C framework contains two networks: the policy net takes the state as input and outputs the action; the value net takes state as input and gives an estimation of the value of the action. Since the revenue made in an FL market accumulates as the FL course continues, we expect that an action taken will result in better future rewards. Temporal-difference (TD) learning is a core learning technique in reinforcement learning \citep{sutton2018reinforcement,sutton1988learning,kaelbling1996reinforcement}, which learns good estimates of the expected return by bootstrapping from other
expected-return estimates \citep{van2016true}. We here adopt TD learning to train the policy net and value net. 

Instead of considering the action space from buyers' perspectives, which is exponential to the number of sellers, we propose to define the action space by sellers. Denote that for an auction round, the total number of authorized sellers is $m$. We formulate the state at round $t$ as $\mbs^t=[\mbs_{m_1}^t,...,\mbs_{m_m}^t]$, where $\mbs_i^t=(\mathbf{\mcM}^t_{i}, \mathbf{A}^t_{i},\mathbf{\mcM}^t_{-i}, \mathbf{A}^t_{-i})$, where $\mathbf{\mcM}^t_{-i} \in \real^{(N-1) \times 1}$, $\mathbf{A}^t_{-i} \in \real^{(N-1) \times d_{model}}$, and $d_{model}$ denotes the size of the model, which represents what seller $i$ is offering and who are the buyers. The policy net takes the normalized $\mbs^t$ as input and calculates the probability weights of a seller choosing each buyer. Denote the policy net as $\pi(\mbs^t;\theta )$ and ranking scores as $\mba^t$, we have:
\begin{equation}
\mba^t=\mbb^t\circ\pi(\mbs^t;\theta),
\label{eq:mono}
\end{equation}
where $\circ$ denotes the Hadamard product. Since $\pi(\mbs^t;\theta)$ is the probability weights of a softmax layer ($>0$), Equation~\ref{eq:mono} is monotonic to $\mbb^t$. Thus, the above formulation satisfies the monotone allocation objective defined in Section~\ref{sec:obj}. For each seller, the top-k-ranked buyers are marked as winners. 

\textbf{Model Aggregation and Goods Delivery.}
\label{sec:goods}
The successful bidders obtain an aggregated model as the delivered good of the current round. Given a bidder $i$, assume that it has been marked as successful on a set of sellers' products, denoted as $\{\phi_i\}$. $\{\phi_i\}$ is calculated by the output of the policy net on the server. The server then applies the aggregation function of FL algorithms across the model weights/gradients of client $i$ and $\{\phi_i\}$. In our setting, for simplicity, we adopt FedAvg \cite{mcmahan2017communication} such as the FL algorithm, which is a classic FL approach that has been widely applied due to its simplicity and low communication cost. Denote the current model weights of bidder $i$ is $A^t_i$ and the goods it receives as $A^{t+1}_i$. $A^{t+1}_i$ is calculated as 
\begin{equation}
A^{t+1}_i=\sum_{j\in \phi_i\cup \{i\}}\frac{n_j}{n}A^{t}_j,
\end{equation}
where $n_j$ is the number of samples of client $j$'s local data and $n$ is the total number of sample of all clients in $\phi_i\cup \{i\}$.

As FedAvg serves as the basic paradigm of most state-of-the-art horizontal FL approaches \cite{li2021fedbn,li2020federated,praneeth2019scaffold}. The generalization of the above equation to other FL frameworks is straightforward. The obtained $A^{t+1}_i$ is then used for the local update of client $i$. Together with $A^{t+1}_i$, the obtained performance $\Delta M_i^{t+1}$ performance gain $\Delta G_i^{t+1}$ are utilized to form the state of the FL market, \ie $\mbs^{t+1}$, in the next round.

\textbf{Payment Calculation.}
Recall that for each seller, the top-k-ranked buyers are marked as winners. To approximate the payment of the winning buyers and meet the desired property of critical bid-based pricing, we propose the following payment function based on \citep{zhang2021optimizing}:
\begin{equation}
\mbp^t_{k_i}=\Delta G_{k_i}^t\mbb_{k_{i+1}}^t\frac{\mba_{k_{i+1}}^t}{\mba_{k_i}^t},
\label{eq:rl_payment}
\end{equation}
where $k_i$ is the i-ranked client of the top-k clients and $\Delta G_{k_i}^t$ is the improved performance after environment update. 

\textbf{Update the RL Solver.}
The value net respectively takes state at round $t$ and the updated state at $t+1$ as input, \ie $\mbs^t$ and $\mbs^{t+1}$ as input, and make prediction on the return:
\begin{equation}
\begin{aligned}
&\hat{v}_t=v(\mbs^t;\omega), \hat{v}_{t+1}=v(\mbs^{t+1};\omega).
\label{eq:value}
\end{aligned}
\end{equation}

We calculate the TD target $\hat{y}_t$ as the summation of true ad-hoc reward at round $t$ and the estimated return at round $t+1$. And the TD error $\hat{\delta}_{t}$ is calculated as the difference between the estimated return at round $t$ and the TD target. Mathematically: 
\begin{equation}
\begin{aligned}
\hat{y}_t=r_t+\hat{v}_{t+1},
\hat{\delta}_{t}=\hat{v}_t-\hat{y}_t.
\end{aligned}
\label{eq:td}
\end{equation}
Based on Equation~\ref{eq:td}, we update the policy net and the value net as follows, where $\alpha$ and $\beta$ are the learning rates.
\begin{equation}
\begin{aligned}
\theta' &\leftarrow \theta -\beta \hat{\delta}_{t} \nabla_{\theta}ln\pi(\mbs^t;\theta),\\
\omega ' &\leftarrow \omega -\alpha \hat{\delta}_{t} \nabla_{\omega}v(\mbs^t;\omega).
\end{aligned}
\label{eq:update}
\end{equation}

Pseudo code of the training of the RL-based solver is deferred to Appendix \ref{app:rl_solver}. And a discussion regarding privacy issues of the FL market can be found in Appendix \ref{app:privacy}.


\section{Experiment}
\label{sec:exp}
\subsection{Experiment Setup}

\textbf{Datasets.}
\label{sec:app:setup}
We evaluate on four datasets: MNIST \cite{lecun1998gradient}, FMNIST \cite{xiao2017fashion}, FEMNIST \cite{caldas2018leaf}, and CIFAR10 \cite{krizhevsky2009learning}.
Table \ref{tb:datasets} summarizes statistics of the datasets. Detailed descriptions of the datasets are deferred to Appendix \ref{app:dataset}.
\begin{table}[h]
\centering
\vspace{-0.3cm}
\caption{Dataset statistics.}
\fontsize{8}{7}\selectfont
\begin{tabular}{@{}l|llll@{}}
\toprule[1pt]
Datasets            & MNIST                                                                                                 & FMNIST                                                                                                & FEMNIST &CIFAR10\\ \midrule
\# training samples & 60,000                                                                                                & 60,000                                                                                                & 24,805  &50,000\\
\# testing samples  & 10,000                                                                                                & 10,000                                                                                                & 5,401  & 10,000\\
\# features         & 784                                                                                                   & 784                                                                                                   & 784   &  1024\\
\# classes          & 10                                                                                                    & 10                                                                                                    & 62   &   10\\
Partition methods   &  $ Dir(0.1)$ & $ Dir(0.1)$ & Natural &$ Dir(0.1)$\\ \bottomrule[1pt]
\end{tabular}
\label{tb:datasets}
\vspace{-0.25cm}
\end{table}

\textbf{Metrics.}
We use Accuracy as the model performance metric of downstream tasks and take Trading Volumes as the metric to measure the economic benefits of the marketplace. For the metric Accuracy, it is calculated locally and reported as the average value of all model buyers' on the market. We report the results on the test environment of the RL solver, \ie after training. We initialize the training and test for 10 times and report the mean and std of the runs.

\textbf{Implementation Details.}
\label{sec:imp}
The federated learning algorithm is implemented as the celebrated FedAvg \citep{mcmahan2017communication}. To realize the RL-based auction framework, both policy net and value net are implemented as a 3-layer multi-layer perception (MLP) with embedding dimensions 1024, 256, and 64. The default number of active sellers in each auction round is set as 70\% of the total number of clients, and $k$ is set as 5. Utility $u_i$ of clients are sampled from a uniform distribution between $(0,1)$. Due to limited space, details of the setting of other hyper-parameters are deferred to Appendix \ref{sec:imp_detail}.



\textbf{Baselines}. To the best of our knowledge, no previous work has been proposed for the federated marketplace. To evaluate the effectiveness of the proposed approach, we include the widely used general second price (GSP) auction as a baseline. To adapt GSP to FL market, we draw a multi-object auction with price discrimination. That is, for a seller's model, top-k highest-priced buyers win the auction, and pay the second highest price. For example, if buyers $\{c_1, c_2, c_3, c_4, c_5\}$ reports bidding values $\{35,22,13,11,1\}$ for a seller's model, and $k=2$, then buyers $c_1$ and $c_2$ win the model and pays $\{22, 13\}$, respectively. 

\subsection{Main Results}



\begin{figure} [t]
\subfigure[Trading Volumes (MNIST)]{
\centering
\includegraphics[width=0.5\linewidth]{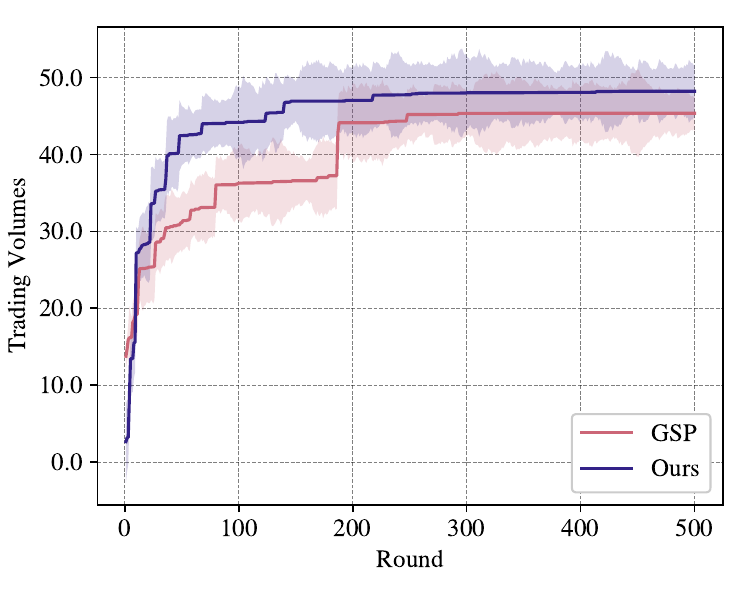}
}%
\centering
\subfigure[Accuracy (MNIST)]{
\centering
\includegraphics[width=0.5\linewidth]{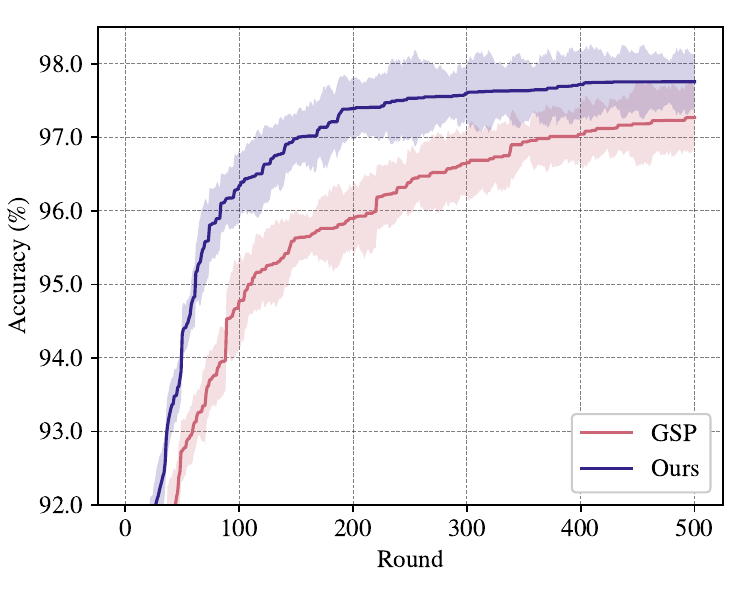}
}%

\vspace{-0.15cm}
\caption{Compare with the baseline on Trading Volumes and downstream task Accuracy on MNIST.}
\vspace{-0.35cm}
\label{fig:main}
\end{figure}

To verify the effectiveness of the proposed approach, we compare it with the baseline GSP on four datasets and record Trading Volumes and downstream task Accuracy of the test environment. The results on MNIST dataset are plotted in Figure \ref{fig:main}. Mean value and 95\% confidence intervals are reported. Due to limited space, results on FMNIST, FEMNIST, and CIFAR10 are deferred to Appendix \ref{app:exp_main_results}. It can be observed that the proposed method (denoted as Ours) outperforms GSP across all datasets, especially for the metric trading volumes. The convergence speed of the proposed approach is also faster than the baseline. This is because the RL markting operation solver on the server side can monitor the market and take better allocation actions, considering both bidding values and the state of the marketplace. In contrast to GSP, where only the willingness of buyers is considered, the proposed approach offers a global perspective from the server side to optimize the allocation and pricing functions, resulting in better performance in terms of revenue and accuracy. More detailed discussions on the main results can be found in Appendix \ref{app:exp_main_results}.

\subsection{The FL Market's Robustness to Bidding Strategies} 

Although the design of clients' bidding strategy is beyond the scope of this work, we recognize that different auction environments may follow different bidding strategies. It is essential for a market design to accommodate various bidding strategies. To this end, we simulate the auction on three bidding strategies. 

$\bullet$ \textit{Stochastic}. A buyer reports her bidding \wrt all sellers' models following certain probabilistic distribution.

$\bullet$  \textit{Greedy}, where, for the first several rounds, a buyer bids using the stochastic strategy while maintaining two lists 1) a revenue list, in which the $i$-th entry is calculated as the summation of revenue in previous auction rounds when seller $i$ is successfully bidden; and 2) a winning count list, in which the $i$-th entry is the number of times when the client wins seller $i$'s model. Based on the two lists, in subsequent auction rounds, the bidding values are determined as softmax of revenue over winning counts multiplying $u_i$. 

$\bullet$ \textit{$\epsilon$-Greedy}, which also begins with stochastic bidding, but in subsequent rounds, a model buyer samples $p\in (0,1)$, and if $p$ is greater than $\epsilon$, the bidding values are determined the same as Greedy, otherwise, the same as Stochastic. The stochastic bidding rounds in greedy-based strategies is set as 20\% of the total number of FL communication rounds, and $\epsilon$ is set as 0.1.
 

\begin{table}[t]
\vspace{-0.15cm}
\caption{Effect of bidding strategies.}
\fontsize{8}{7}\selectfont
\centering
\begin{tabular}{@{}c|c|c|ll@{}}
\toprule
Dataset                  & Method                & \begin{tabular}[c]{@{}c@{}}Bidding\\ Strategy\end{tabular} & \multicolumn{1}{c}{\begin{tabular}[c]{@{}c@{}}Accuracy\\ (\%)\end{tabular}} & \multicolumn{1}{c}{\begin{tabular}[c]{@{}c@{}}Trading\\ Volumes\end{tabular}} \\ \midrule
\multirow{6}{*}{MNIST}   & \multirow{3}{*}{GSP}  & Stochastic                                                 & 97.27$\pm$0.27                                                              & 45.37$\pm$4.53                                                                \\
                         &                       & Greedy                                                     & 97.12$\pm$0.04                                                              & 65.97$\pm$4.71                                                                \\
                         &                       & $\epsilon$-Greedy                                          & 97.87$\pm$0.19                                                              & 61.88$\pm$4.40                                                                \\ \cmidrule(l){2-5} 
                         & \multirow{3}{*}{Ours} & Stochastic                                                 & 97.76$\pm$0.20                                                              & 48.22$\pm$5.37                                                                \\
                         &                       & Greedy                                                     & 97.71$\pm$0.21                                                              & 66.00$\pm$3.67                                                                \\
                         &                       & $\epsilon$-Greedy                                          & 97.41$\pm$0.13                                                              & 93.20$\pm$3.25                                                                \\
 \bottomrule
\end{tabular}
\label{tb:bid}
\vspace{-0.35cm}
\end{table}

Results on MNIST dataset are shown in Table \ref{tb:bid} and others are deferred to Appendix \ref{app:exp_bid}. It is observed that the proposed approach outperforms GSP in most cases, which verifies its robustness. It is worth mentioning that the advantage of the proposed method is more significant under greedy-based bidding strategy. This is because clients' bidding behavior is more stable, which reduces the dynamics of the marketplace and makes it easier for the policy and value networks to train. Another interesting observation is that both greedy-based strategies achieve higher cumulative reward while lower accuracy compared with Stochastic. This can result from the increased number of successful biddings when clients tend to bid towards certain others. However, it is not yet guaranteed that the obtained model can make a positive contribution to the client's performance. 

\subsection{Social Fairness Analysis}

\begin{figure} [t]
\centering
\subfigure[Overall Trading Volumes]{
\centering
\includegraphics[width=0.47\linewidth]{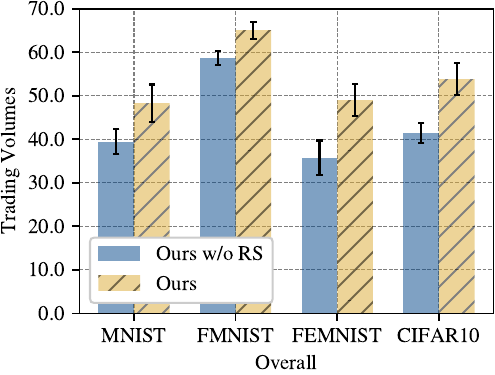}
}%
\subfigure[Bottom 10\% Trading Volumes]{
\centering
\includegraphics[width=0.47\linewidth]{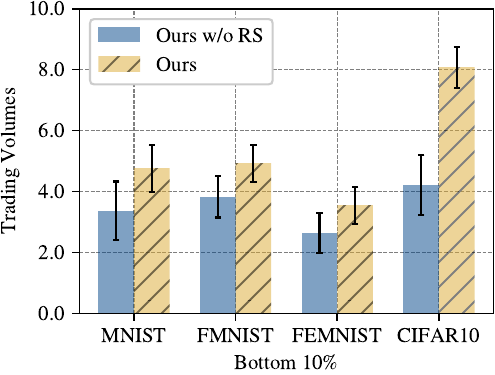}
}%
\\
\subfigure[Overall Accuracy]{
\centering
\includegraphics[width=0.47\linewidth]{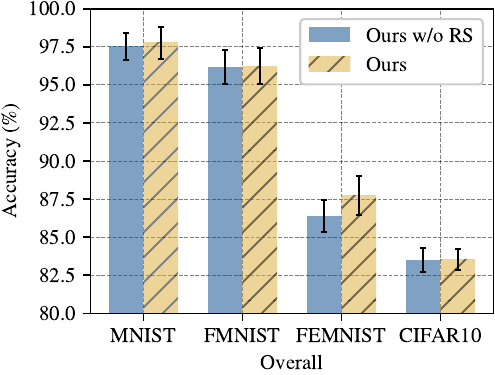}
 }%
\subfigure[Bottom 10\% Accuracy]{
\centering
\includegraphics[width=0.47\linewidth]{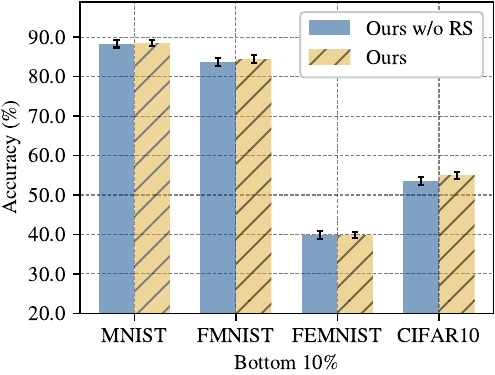}
}%
\centering
\vspace{-0.2cm}
\caption{Effect of the reward shaping mechanism.}
\vspace{-0.35cm}
\label{fig:rs}
\end{figure}

In Equation~\ref{eq:rs}, we introduce a reward shaping term to enhance social fairness, \ie the reward of a client should be higher than a predefined threshold. To verify the effectiveness of this mechanism, we conduct an ablation study. In the variant without the reward shaping term (denoted as Ours w/o RS), we remove the reward shaping term in Equation~\ref{eq:rs} and run the auction. We compare the variant and the originally proposed approach on three datasets with respect to metrics overall accuracy, trading volumes, bottom 10\% accuracy, and bottom 10\% trading volumes. The last two metrics are calculated as the average performance of the bottom 10\% clients in terms of accuracy and trading volumes, and the results are plotted in Figure \ref{fig:rs}. 

With no unfairness penalty (denoted as Ours w/o RS), it can be observed that the variant has worse trading volumes on bottom clients, although its overall trading volumes is compatible with the original approach. A similar phenomenon can be found in accuracy. This confirms the effectiveness of the reward shaping term. In addition, we find that the fairness term has a more significant effect on revenue compared with accuracy. The relative improvements after applying reward shaping are $\sim 40\%$, $\sim 0.5\%$ in Figure~\ref{fig:rs} (b), (d) respectively. This can be attributed to the fairness term has a more significant effect on revenue compared to accuracy because it specifically targets monetary equality of the allocation process.

Besides, we also conduct experiments regarding key hyper-parameters of the FL market: number of model copies, buyer utility distribution, and the ratio of authorized sellers. Due to limited space, the results are deferred to Appendix \ref{app:exp_hyper_param}.

\section{Related Work}
\label{sec:related}
\subsection{Federated Learning}
Depending on how data is divided in terms of samples and features, FL can be categorized into three primary groups: horizontal federated learning (HFL) \cite{li2022federated,shi2023towards,zhu2021federated}, vertical federated learning (VFL) \cite{liu2022vertical,wei2022vertical,li2023vertical}, and Federated Transfer Learning (FTL) \cite{yang2019federated}. HFL can be further categorized into cross-silo \cite{li2022federated} and cross-device setting, depending on how large amount of data each client possesses. Our work falls into the category of HFL and cross-silo settings. The possibility of isolated training on institutions that own a relatively large amount of data brings value to the local models/data and that well suits the nature of the marketplace.

\subsection{Incentive Mechanism Design in FL}
The majority of research on incentive mechanisms in cross-device scenarios can be categorized into three categories: auction-based ~\citep{pei2020survey,incentive2022survey}, contract theory-based~\citep{Huang20contract,Lim21IoVcontract,Ye20contract,Ding21contract}, and game-theory-based mechanisms~\citep{Weng21ng_FedServing,Radanovic14ng,tahanian2021game,Yu18sg,Zhan20sg,Feng19sg,Pandey19sg,Hu20sg,liu2020fedcoin}. However, only a few of existing works focus on the incentive mechanism design in cross-silo FL. These studies are mainly based on game theory to determine the best reward allocation scheme. In~\citep{Tang21cross_silo}, the authors design the incentive scheme for cross-silo FL from the public goods aspect, and formulate the mechanism design problem as a social welfare maximization problem. \citep{Song19Shapley} adapts Shapley value to measure the contribution of the FL participants and allocate the profit generated by the trained FL model. In~\citep{Han22tokenCrossSilo}, the authors propose a tokenized incentive mechanism, which adopts the tokens to pay for the clients' training services at each FL round. Compared to these existing works, our proposed mechanism introduces auction, which allows clients to trade their updated models during FL communication rounds. This enhances the flexibility and the effectiveness of the model pricing in cross-silo FL.

\section{Conclusion and Future Work}
\label{sec:conclusion}
In this paper, we propose to view the model exchanging in FL as trading to increase liquidity of models and fairly reward the model owners. We take the view from the side of the trading platform and design an auction-based FL market. We propose to price local model updates of clients based on performance gain and introduced an RL-based framework as marketing operation solver, aiming at maximizing trading volumes while satisfying social fairness and efficient allocation constraints. Additionally, we establish theoretical contributions to incentivise true valuation revealing of buyer clients. Experiments on four real-world datasets show the effectiveness of the proposed approach. 

\textbf{Limitations.} However, the proposed approach has several limitations: 1) automatic bidding can be studied and applied to maximize individual buyer clients' benefits; and 2) the number of model copies selling by each seller may be different across clients/rounds. An allocation function with an authorizing module can be a possible solution. Nevertheless, we maintain a positive view of the impact of this paper and hope it serves as a foundational work on FL markets and inspires future research.


\section*{Broader Impact}

The approach proposed, which encompasses an auction-based marketplace for model exchanges and a reinforcement learning-based framework for market operations, has the potential to affect the societal structure of collaborative machine learning by promoting equitable compensation for data contributions and model improvements. 

Ethically, our approach attempts to enforce social fairness and efficient allocation of resources, which may help mitigate some of the biases that can be present in machine learning. By rewarding contributions based on performance gain, there is a tangible incentive to provide high-quality data and model updates, which could lead to overall improvements in the fairness and accuracy of FL systems.
\bibliography{refs}
\bibliographystyle{icml2024}

\newpage
\appendix
\onecolumn
\section{Summary of Notations}
\label{app:notation}
\begin{table}[h]
\centering
\caption{Summary of notations.}
\label{tb:notations}
\begin{tabular}{@{}ll@{}}
\toprule[1pt]
Notation          & Description                                                                                                                                  \\ \midrule
\rowcolor[HTML]{FFFFFF} 
$i, N$            & Index and total number of clients                                                                                                            \\ \midrule
$t,T$             & Index and total round of biddings                                                                                                            \\ \midrule
\rowcolor[HTML]{FFFFFF} 
$u_i$             & Utility of a unit performance increase of client $i$                                                                                         \\ \midrule
\rowcolor[HTML]{FFFFFF} 
\rowcolor[HTML]{FFFFFF} 
$b_{i,j}^{t}$   & \begin{tabular}[c]{@{}l@{}}Client $i$'s bidding to client $j$'s model for the \\  received dividend of a unit performance increase\end{tabular} \\ \midrule
\rowcolor[HTML]{FFFFFF} 
$\mathcal{M}_i^t$ & \begin{tabular}[c]{@{}l@{}}Downstream task performance of client $i$ \\ at round $t$\end{tabular}                                            \\ \midrule
\rowcolor[HTML]{FFFFFF} 
$A_i^t$           & \begin{tabular}[c]{@{}l@{}}Model uploaded to the server of client $i$ \\ at round $t$\end{tabular}                                           \\ \midrule
\rowcolor[HTML]{FFFFFF} 
$G_i^t$           & Performance gain of client $i$ at round $t$                                                                                                  \\ \midrule
\rowcolor[HTML]{FFFFFF} 
$\Delta G_i^t$    & \begin{tabular}[c]{@{}l@{}}Difference of performance gain of client $i$ \\ between round $t$ and $t-1$\end{tabular}                          \\ \midrule
\rowcolor[HTML]{FFFFFF} 
$p_{i,j}^t$       & Price paid by client $i$ to client $j$ at round $t$                                                                                          \\ \midrule
\rowcolor[HTML]{FFFFFF} 
$a_{i,j}^t$       & Allocation score of client $i$'s bidding to client $j$                                                                                       \\ \midrule
 $R_i^t$      & Client $i$'s reward at round $t$\\                                                                                         
\bottomrule[1pt]
\end{tabular}
\end{table}

\section{Proofs}
\subsection{Proof of Theorem \ref{th:copy}}
\label{app:proof1}

\begin{theorem} [\textbf{Theorem \ref{th:copy} Restate}]
When each model seller has no limits on the number of the sold model copies, no model buyer will have the incentive to submit a bid equaling its predicted monetary utility unless there exists a constraint on the number of model copies:
\begin{equation}
\left\{
\begin{aligned}
&b_{i,j}^t < u_i^t(X_{i,j}^{t-1}) , \quad \text{if} \quad k = N;   \\
&b_{i,j}^t \leq u_i^t(X_{i,j}^{t-1}), \quad \text{if}\quad k < N.
\end{aligned}
\right .
\end{equation}

\label{th:copy_restate}
\end{theorem}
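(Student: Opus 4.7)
The plan is to analyze the two regimes $k=N$ and $k<N$ separately through the lens of the critical-bid (Vickrey-style) pricing that Equation~\ref{eq:rl_payment} implements. First I write down a model buyer's per-slot utility: if buyer $i$ wins seller $j$'s model under bid $b_{i,j}^t$, its net utility contribution is
\[
\bigl(u_i^t(X_{i,j}^{t-1}) - \pi_{i,j}^t\bigr)\,\Delta G_{i}^t,
\]
where $\pi_{i,j}^t := p_{i,j}^t/\Delta G_{i}^t$ is the per-unit-performance-gain price induced by the allocation and payment rules; if $i$ loses, the contribution is $0$. Under Assumption~\ref{ass:ir}, a rational buyer picks $b_{i,j}^t$ to maximize this quantity.

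Next I handle the case $k<N$. Here the payment rule of Equation~\ref{eq:rl_payment} charges winner $k_i$ a price whose decisive ingredient is the $(k{+}1)$-th-ranked competing bid $\mbb_{k_{i+1}}^t$. I then invoke the standard second-price case split: (i) overbidding $b_{i,j}^t>u_i^t$ can only change the outcome by making $i$ win additional slots in which the critical bid exceeds $u_i^t$, producing strictly negative utility; (ii) any report in $(0,u_i^t]$ never triggers negative utility, since winning happens only at prices weakly below $u_i^t$. Consequently $b_{i,j}^t\le u_i^t$ is weakly dominant, which is exactly the second line of the theorem.

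Then I treat the case $k=N$. Now the top-$k$ set contains every buyer, no $(k{+}1)$-th-ranked bidder exists, and the term $\mbb_{k_{i+1}}^t$ in Equation~\ref{eq:rl_payment} becomes undefined. Under any natural continuation of the rule --- pay-as-bid, the lowest submitted bid, or a zero reserve --- the effective per-unit price $\pi_{i,j}^t$ is either a weakly monotone function of $b_{i,j}^t$ that equals $b_{i,j}^t$ at the truthful report, or collapses to a quantity unrelated to buyer valuation. In the former sub-case the utility $(u_i^t-\pi_{i,j}^t)\Delta G_i^t$ is strictly decreasing in $b_{i,j}^t$ near $u_i^t$, so bidding $u_i^t$ is strictly dominated by some smaller positive bid; in the latter sub-case there is no monetary mechanism rewarding the report $u_i^t$ over any lower one, so a rational buyer again reports strictly below $u_i^t$. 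Either way we obtain $b_{i,j}^t<u_i^t$, as required by the first line.

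The main obstacle I expect is Step~3: formalising the mechanism's behaviour when the critical bid vanishes, since Equation~\ref{eq:rl_payment} is literally undefined at $k=N$. The cleanest approach is to enumerate the natural continuations of the rule and verify that, in each, utility is non-increasing in the own bid near the truthful report so that truthful revelation is strictly dominated. Combining this with the $k<N$ analysis yields the piecewise bound asserted in Theorem~\ref{th:copy_restate}.
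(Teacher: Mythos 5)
Your argument reaches the right conclusions but by a genuinely different route from the paper's. The paper does not reason about the payment rule at all: it first posits that after an exploration phase each buyer's information about sellers stabilizes ($|X_j^{t}-X_j^{t-1}|<\epsilon$), then argues that with $k=N$ every bidder wins with certainty, so even in the worst case where all buyers chase one seller a buyer bids only a small $\tau<u_i^t(X_{i,j}^{t-1})$ on its highest-expected-revenue targets and $0$ elsewhere; for $k<N$ it constructs the worst case in which all $N$ buyers compete for $k$ copies of a single seller $\hat{j}$'s model and concludes they are pushed to bid ``at most truthfully,'' i.e.\ it is an existence/scenario argument rather than a dominance argument. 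You instead anchor everything in the critical-bid payment of Equation~\ref{eq:rl_payment} and run a Vickrey/GSP-style weak-dominance analysis, which buys a statement that holds for every buyer and every bid profile rather than for a constructed worst case, and which avoids the paper's extra stabilization assumption. The cost is twofold. First, your step (ii) for $k<N$ --- ``winning happens only at prices weakly below $u_i^t$'' --- is not immediate here because ranking is by the score $\mba^t=\mbb^t\circ\pi(\mbs^t;\theta)$ rather than by raw bids, so the next-ranked competitor can carry a larger raw bid $\mbb_{k_{i+1}}^t$ than yours; you need the ratio $\mba_{k_{i+1}}^t/\mba_{k_i}^t<1$ to close the bound, and that takes a short extra computation you have not written out. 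Second, in your $k=N$ sub-case where the price collapses to something bid-independent, you only get indifference, so the conclusion should be phrased as ``no incentive to report $u_i^t$ over a lower bid'' (which is all the theorem asserts) rather than strict domination. Neither issue is fatal, and your version is arguably tighter than the paper's informal behavioral argument, but be aware you are proving a dominance statement the paper never attempts.
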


\begin{proof}
We first assume after a certain FL period, each model buyer will acquire sufficient information of the model seller through exploration bidding.

\begin{equation}
    |X_j^{t} - X_j^{t-1}| <\epsilon, \quad \text{for} \quad  j\in N.
\end{equation}

Then, each model buyer will only submit its bid on its highest expected revenue and receive the model copy that it bids for with 100\% certainty as we do not limit the number of model copies ($k=N$).
\begin{equation}
\left\{
\begin{aligned}
     &b_{i,j}^t = \tau < u_i^t(X_{i,j}^{t-1}), \quad &\text{if } j \in \mathop{\arg\max}\limits_Z \sum_{z\in Z} u_i^t(X_{i,z}^{t-1})-  b_{i,z}^t, Z \subset N; \\
     &b^t_{i,j} =0,  \quad &\text{otherwise}.
\end{aligned} 
\right .
\end{equation}

Normally, different model buyers adopt the maximal model increase from different model sellers, and even in the worst cases that all buyers only bid for one model seller, it will still bid at a small $\tau$ as it will always receive the model copy.

We then prove the existence of a model buyer who will bid equaling to his expected utility when the model seller has a limit model copies $k<N$ after the exploration period, \ie the condition for equality between $\tau$ and $u_i^t(X_{i,j}^{t-1})$. 

In the worst case mentioned above, when all model buyers achieve a certain model increase from one specific model seller $\hat{j}$ while obtaining no improvement from other models, all model buyers will bid for this model $A_{\hat{j}}^{t-1}$ in the round $t$.
\begin{equation}
\left\{
\begin{aligned}
     &u_i^t(X_{i,j}^{t-1}) >0, \quad  \text{if} \quad j = \hat{j}; \\
     &u_i^t(X_{i,j}^{t-1}) =0, \quad \text{otherwise}.
\end{aligned} 
\right .
\end{equation}

Then, to become the $k$ winners out of $N$ bidders, the buyer clients have the incentive to bid at most truthfully, \ie

\begin{equation}
\left\{
\begin{aligned}
     &b_{i,j}^t  =\tau \leq u_i^t(X_{i,j}^{t-1}), \quad &\text{if } j = \hat{j}; \\
     &b^t_{i,j} =0, \quad &\text{otherwise}.
\end{aligned} 
\right .
\end{equation}

This completes the proof.

\end{proof}

\subsection{Proof of Theorem \ref{th:curse}}
\label{app:proof2}

\begin{theorem}[\textbf{Theorem \ref{th:curse} Restate}]
Suppose the clients authorized as model sellers at each round are determined by a selective mechanism on the server. If all clients are authorized to sell at each round, model buyers will not bid truthfully. But there exists a selection mechanism $S$ that can urge a sufficient number of model buyers to bid truthfully. 
\label{th:curse_restate}
\end{theorem}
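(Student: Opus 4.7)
The plan is to prove the two halves of the statement separately: first, that under full authorization in every round truthful bidding cannot be sustained by any buyer; second, that a randomized selection mechanism $S$ exists which restores truthful bidding for a sufficient subset of buyers. The common engine is an \emph{option value of delay} — with unrestricted availability, a buyer who loses the current auction for $A_j$ simply re-bids in later rounds on essentially the same item, whereas a random-authorization mechanism makes future availability uncertain and collapses this option value. This matches the author's sketch that ``the allocation is partially observed, [so the buyer] does not have the expectation that a target model will be available in future rounds.''

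For the first implication I would argue by one-shot deviation. Fix a buyer $i$ in round $t$ facing seller $j$ with $u_i^t(X_{i,j}^{t-1})>0$, and assume for contradiction $b_{i,j}^t = u_i^t(X_{i,j}^{t-1})$. Consider the shade $b'_{i,j} = u_i^t(X_{i,j}^{t-1}) - \epsilon$. By the critical-bid pricing rule from Section~\ref{sec:obj} and the monotone allocation property in Equation~\ref{eq:mono}, the expected loss in round $t$ is $O(\epsilon)$, incurred only on the event that the truthful bid was the marginal $k$-th winner. Because full authorization guarantees that seller $j$ reappears in round $t+1$ with only an incremental change in its model weights, buyer $i$ re-enters the auction with strictly positive probability of facing weaker competing bids and so realizes a first-order expected gain across the remaining horizon. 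This yields a strictly profitable deviation, contradicting the assumed truthful profile and establishing the first claim.

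For the second implication I would construct $S$ as the Bernoulli mechanism that independently authorizes each seller with probability $q\in(0,1)$ each round, combined with the copy constraint $k<N$ from Theorem~\ref{th:copy}. Conditional on $j$ being authorized at round $t$, the expected number of remaining rounds in which $j$ reappears is $q(T-t)$, so the continuation value of any shade-and-wait strategy is bounded above by a quantity linear in $q$. The one-round expected surplus from truthful bidding is independent of $q$; hence for $q$ small enough the continuation value falls strictly below the current-round surplus, and an envelope argument on best responses forces truthful bidding to be optimal for any buyer whose predicted utility $u_i^t(X_{i,j}^{t-1})$ exceeds a mild gap above the competing $(k{+}1)$-st bid. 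Counting the buyers meeting this gap supplies the ``sufficient number'' clause, and Assumption~\ref{ass:uni} (unified bidding strategy) lets me symmetrize competitor behavior to make the count tractable.

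The hard part, I expect, is quantifying the trade-off between the one-round surplus and the geometrically decaying option value when competitor bid distributions are themselves endogenous: under $S$ both the deviator and her competitors face the same stochastic availability, so their bid distributions shift together. My plan is to fix a small symmetric perturbation around the truthful profile, use the monotonicity of the RL allocation score $\mba^t = \mbb^t \circ \pi(\mbs^t;\theta)$ from Equation~\ref{eq:mono} to bound the allocation and payment responses to an $\epsilon$-shade, and then choose $q$ in an explicit range so that the strict inequality survives in expectation for the fraction of buyers claimed, completing the existence statement.
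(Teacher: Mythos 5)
Your core engine --- guaranteed future availability of a model creates an option value that undermines truthful bidding, and randomized seller authorization collapses that option value --- is exactly the paper's intuition, but your formalization takes a genuinely different route in both halves. For the negative direction, the paper does not run a one-shot-deviation argument against a truthful profile; it argues that with every seller authorized every round, each buyer's target-model identity and the competition for that model stabilize (the set $\{j \mid P_j(b_{i,j}^t)\, u_i^t(X_{i,j}^{t-1})>0\}$ becomes predictable over rounds), so a buyer can bid $b_{i,j}\sim 0$ on its fixed target and still expect to win --- the failure of truthfulness is driven by within-round predictability of competition, not by your intertemporal ``shade now, re-bid later'' channel. Your channel also needs more care than the plan gives it: under the critical-bid pricing of Equation~\ref{eq:rl_payment}, an $\epsilon$-shade does not change what a winner pays conditional on winning, so the claimed ``first-order expected gain across the remaining horizon'' must be derived purely from the event of losing the marginal slot and recovering the surplus later at a lower critical bid; as written that strict profitability is asserted, not established. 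For the positive direction, your Bernoulli-$q$ mechanism with small $q$ is essentially the paper's $m$-random-picking $S_m$ with small $m$ (the paper takes $m=1$ as the extreme case), and your conclusion is in fact more faithful to the theorem statement: you claim truthfulness only for buyers clearing a gap above the $(k{+}1)$-st competing bid, which is what ``a sufficient number'' requires, whereas the paper passes from $b_{i,j}^t \,|\, S_m \geq b_{i,j}^t$ to exact saturation $b_{i,j}^t \,|\, S = u_i^t(X_{i,j}^{t-1})$ without justifying why the bid rises all the way to the utility. What your route buys is a quantitative handle (the parameter $q$, an explicit continuation-value bound, an envelope argument) that could in principle be made rigorous; what the paper's route buys is brevity at the level of a heuristic sketch. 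Neither is a complete proof, but yours isolates the correct quantities to bound.
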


\begin{proof}
In the last part of the proof of Theorem \ref{th:copy}, one may notice that a model buyer will stick to models it will successfully bid and has positive utility in reward, denoted as below. 
\begin{equation}
\left\{
\begin{aligned}
     & b_{i,j}^t  \leq u_i^t(X_{i,j}^{t-1}), & \text{if } j \in \{j|P_j(b_{i,j}^t)*u_i^t(X_{i,j}^{t-1})>0\}; \\
     & b^t_{i,j} =0, \quad &\text{otherwise}.
\end{aligned} 
\right .
\end{equation}
where $P_j$ denotes the possibility that it is only $k$ part of the bidders who submit large than $0$ bid for the model $j$. 

As all models are accessible, model buyers may learn a near-optimal bidding strategy that they only bid for those models with sufficient performance increase (denoted as utility) with a sufficient small bid $b_{i,j} \sim 0$, and finally all model buyers bidding strategy will converge due to the stable performance increase prediction feature $X_j$ and no one will change his target bidding model identification $j^T=j$ in the latter $T$ round.

However, when at each round, the set of clients whose model can be sold is determined by the host with a certain selective mechanism $S$, such a situation will not exist.
Take an m-random-picking mechanism $S_{m}$ as an example, where each round started host randomly picks $m<N$ clients to sell models, and no model buyer can forecast whose model will be sold. As a result, it can never predict whether it is fortunate enough to become the $k$ part of bidders who submit large than $0$ bid for the model $j \in \{S_{m}(N)^t\} $ if $m$ is smaller enough (i.e. $m=1$) where $\{S_{m}(N)^t\}$ is the $m$ randomly selected model sellers at round $t$. Then, the selective mechanism nudges every model buyer $i$ to submit a higher bid on the target sold model $j \in \{S_{m}(N)^t\}$ than the bid without the selective mechanism. Mathematically,
\begin{equation}
    b_{i,j}^t | S_{m} \quad \geq b_{i,j}^t,\quad j \in \{S_{m}(N)^t\}.
\end{equation}

That is, when authorized sellers vary in each auction round via a selection mechanism $S$, e.g., random selection, it will urge a sufficient large number of model buyers to bid as high as possible in round $t$ with accessible model $j \in \{S(N)^t\}$, where $\{S(N)^t\}$ denotes the selected sellers at round $t$:
\begin{equation}
    b_{i,j}^t | S = u_i^t(X_{i,j}^{t-1}) , j \in \{S(N)^t\}.
\end{equation}
This completes the proof.
\end{proof}

\section{Training of the RL-based Solver}
\label{app:rl_solver}

Pseudo code and pipeline of the training of the RL-based solver is in Algorithm \ref{alg:training} and Figure \ref{fig:rlow_rl}.

\begin{figure*}[t] 
\centering
\includegraphics[width=0.8\linewidth]{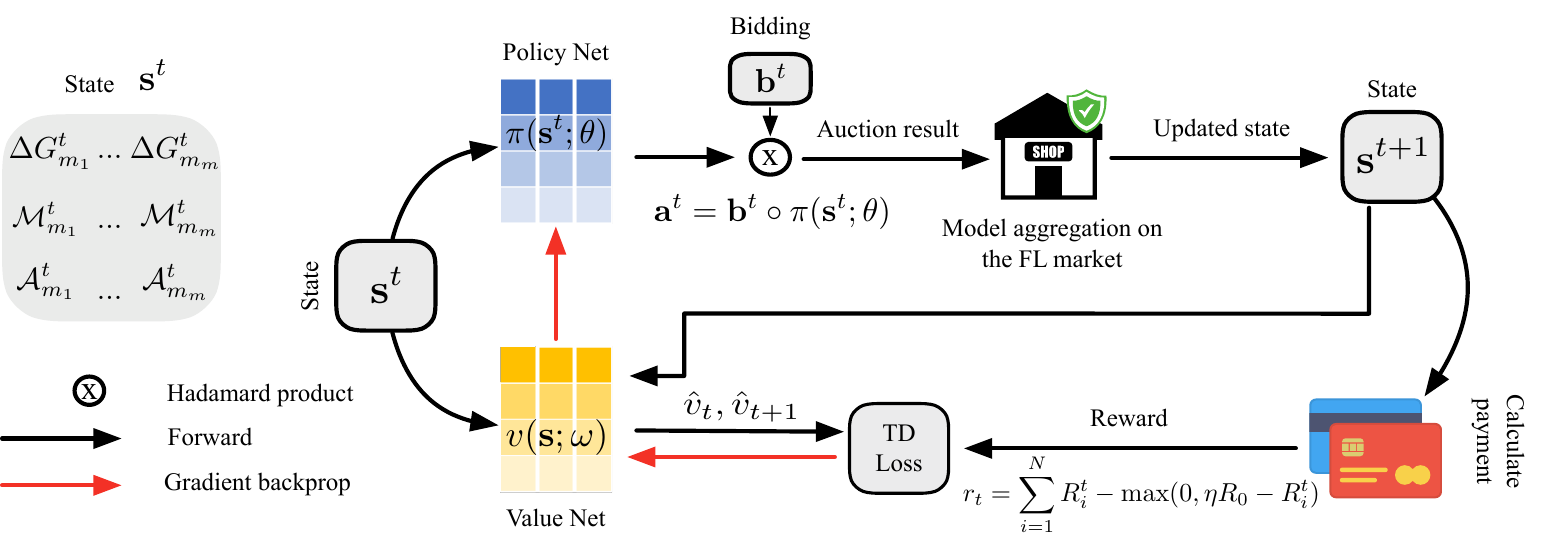}
\caption{The reinforcement learning based solution for auction-based FL market.}
\label{fig:rlow_rl}
\end{figure*}
\begin{algorithm}[t]
\caption{Training of the RL-based FL Market} 
\begin{algorithmic}[1]
\REQUIRE 
{$N$ clients (participants of the FL market); the performance of isolated training model on each client: $\{M_{0,i}\}$;}
\STATE {\textbf{Initialize:} Initialize model weights of the policy net $\theta$ and value net $\omega$; assign $t$ as 0: $t\leftarrow 0$.}
\WHILE {$t<T$:}
    \STATE \textcolor{gray}{/*On the side of the server*/}
    \STATE {The server randomly authorizes a subset of sellers for selling, denoting the number of authorized sellers as $m$;}
    \STATE \textcolor{gray}{/*On the side of the seller clients*/}
    \STATE {The authorized seller clients announce model performance and upload model weights to the server, denoted as $\{\mcM_{m_i}^t\}$ and $\{A_{m_i}^t\}$;}
    \STATE \textcolor{gray}{/*On the side of the buyer clients*/}
    \STATE {The buyer clients report bidding prices to the server, denoted as $\{\mbb^t_{k_j}\}$;}
    \STATE \textcolor{gray}{/*On the side of the server*/}
    \STATE {Formulate the state of the market $\mbs^t \leftarrow [\mbs_{m_1}^t,...,\mbs_{m_m}^t]$;}
    \STATE {Calculate action score by Equation \ref{eq:mono}, determining the winner buyers who has the top-k randed score;}
    \STATE {Performing model aggregation and delivering the aggregated models to winner buyers, obtaining $\mbs^{t+1}$ from the reported performance gain of clients;}
    \STATE {Calculate $\hat{v}_t \leftarrow v(\mbs^t;\omega), \hat{v}_{t+1} \leftarrow v(\mbs^{t+1};\omega)$ as in Equation \ref{eq:value};}
    \STATE {Performing TD-based model update according to Equation \ref{eq:td} and \ref{eq:update};}
    \STATE {$t\leftarrow t+1$;}

\ENDWHILE
\end{algorithmic}
\label{alg:training}
\end{algorithm}
\section{Privacy Analysis}
\label{app:privacy}
In the proposed FL market, the information transmitted between clients and the server includes model parameters and model performance. However, as submitting model parameters to the central server is consistent with the classic FL paradigm FedAvg \cite{mcmahan2017communication}, and the model performance value contains zero information about sensitive information of clients' local data, the FL market brings no additional privacy threats. This is because though model performance is a function of local data, to the best of our knowledge, there is no evidence that the function is invertible. That is, it is infeasible to deduce the local data merely from model performance. 
\section{Detailed Dataset Description}
\label{app:dataset}
\begin{itemize}
    \item \textbf{MNIST} \footnote{\url{http://yann.lecun.com/exdb/mnist/}}: MNIST is a handwritten digits recognition dataset with 60,000 training samples and 10,000 testing samples. Each sample is a 28x28 grayscale image containing a digit from 1-10.


 \item \textbf{FMNIST} (Fashion-MNIST) \footnote{\url{https://github.com/zalandoresearch/fashion-mnist}}: The FMNIST is an MNIST-like fashion product dataset with 10 categories. The train/test splitting ratio, the image size, and the data format are the same as MNIST.


 \item \textbf{FEMNIST} (Federated Extended MNIST) \footnote{\url{https://leaf.cmu.edu/}}: FEMNIST is a widely adopted FL dataset constructed from the Extended MNIST (EMNIST) dataset, containing 62 categories of handwritten characters/digits. Since the dataset is collected from 3,350 writers, the federated non-IID dataset can be constructed with each writer as one client. Following the settings in~\citep{caldas2018leaf}, a subset of the dataset with 50 clients is created, and in each client, the dataset is split into training, validation, and testing sets with a ratio of 6:2:2. 

 \item \textbf{CIFAR10} \footnote{\url{https://www.cs.toronto.edu/~kriz/cifar.html}}: 
The CIFAR10 dataset is a collection of 60,000 32x32 color images in 10 different classes, with 6,000 images per class. The classes include common objects such as airplanes, automobiles, birds, cats, deer, dogs, frogs, horses, ships, and trucks. The dataset is divided into 50,000 training images and 10,000 testing images.
 \end{itemize} 
FEMNIST dataset has natural federation partition. For MNIST, FMNIST, and CIFAR10 datasets, following the heterogeneous partition manners used in \citep{li2022federated,marfoq2021federated,caldas2018leaf,diao2020heterofl,hsu2019measuring}, we apply Dirichlet allocation with Dirichlet factors equals 0.1 to split these datasets into 100 clients.

\section{Other Implementation Details}
\label{sec:imp_detail}

All experiments are implemented on 2 Intel(R) Xeon(R) Gold 6248R CPU @ 3.00GHz and 8 NVIDIA A100.

The federated learning algorithm is implemented as the celebrated FedAvg \citep{mcmahan2017communication}. As for the image classification network, following \citep{mcmahan2017communication}, we implement a 2-layer multi-layer perception (MLP) for MNIST and FMNIST datasets, with embedding size set as 200. For FEMNIST, also following \citep{mcmahan2017communication} we use a convolutional neural network with 2 hidden layers. The embedding dimension is set as 32 and 64, respectively, with ReLU activations and kernel-size-2 max pooling layers followed. 

To realize the RL-based auction framework, both policy net and value net are implemented as a 3-layer multi-layer perception (MLP) with embedding dimensions 1024, 256, and 64. For the model parameter component in $\mbs^t$, instead of using the whole model of a client, we take model weights of the last layer as representative. We take $\alpha$ and $\beta$ in Equation \ref{eq:update} as 5e-4. The default number of active sellers in each auction round is set as 70\% of the total number of clients, and $k$ is set as 5. Utility $u_i$ of clients are sampled from a uniform distribution between $(0,1)$. We define $\eta R_0$ as an adaptive value calculated as 0.5\% of the most rewarded clients. The total number of training rounds of the RL framework is set as 200 on all datasets. We use different random seeds to distinguish training, evaluation, and testing environments. All results are on the testing environment if not specified. We take isolated training \citep{yao2022federated} Accuracy as $\mcM_0$. We set the number of auction rounds of the evaluation and testing environment as 500 for MNIST, FMNIST and CIFAR10, 100 for FEMNIST. This setting ensures that the auction is converged. 



Since the proposed FL market focuses on the design of the market rather than the optimization of the bidding strategy, we suppose the utility rate of clients are under normal distribution $\mathcal{N} (0,1)$ so as the bidding values. After the sampling of bidding values, we normalize them according to the utility rate to meet the IR assumption. 

\section{Full Version of Related Work}
\label{app:related_work}
\subsection{Federated Learning}

Federated learning (FL) techniques maintain data confidentiality by ensuring that raw data remains at its source. Depending on how data is divided in terms of samples and features, FL can be categorized into three primary groups: horizontal federated learning (HFL) \cite{li2022federated,shi2023towards,zhu2021federated}, vertical federated learning (VFL) \cite{liu2022vertical,wei2022vertical,li2023vertical}, and Federated Transfer Learning (FTL) \cite{yang2019federated}. HFL supports cooperative training among clients possessing different data samples but identical feature spaces. VFL allows for the training of clients' datasets with shared samples but distinct features. FTL addresses situations where both sample and feature spaces differ in clients' datasets. In addition to enhancing FL models' performance and efficiency, their commercialization has gained growing interest. HFL can be further categorized into cross-silo \cite{li2022federated} and cross-device setting, depending on how large amount of data each client possesses. Our work falls into the category of HFL and cross-silo settings. The possibility of isolated training on institutions that own a relatively large amount of data brings value to the local models/data and that well suits the nature of the marketplace.

\subsection{Incentive Mechanism Design in FL}

In recent years, the majority of research on incentive mechanisms in cross-device scenarios can be categorized into three categories: auction-based mechanism, contract theory-based mechanism, and game-theory-based mechanism.

\textbf{Auction-based:} Auction-based incentive mechanism involves the creation of an auction where the clients (edge device) bid for the right to participate in FL~\citep{pei2020survey,incentive2022survey}. 
For example, in~\citep{deng2021fair}, the authors propose a quality-aware auction mechanism that evaluates the contribution of the edge devices to the FL process through the reduction in loss. 
In~\citep{Le20VCG}, a Vickrey-Clarke-Groves (VCG) based auction mechanism is proposed to find the optimal client selection and payment determination. 

\textbf{Contract-Theory-based:} The objective of the contract theory-based incentive mechanism is to construct the optimal contract that can maximize the payoff or the utility of the server~\citep{Huang20contract}.
In~\citep{Lim21IoVcontract}, the authors applied a multi-dimensional contract in the UAV-Enabled Internet of Vehicles (IoV) scenario. 
Meanwhile, in~\citep{Ding21contract}, the authors take the data size and the communication time as the contract decision variables and explore the optimal contract design under different information cases: complete information, weakly incomplete information, and strongly incomplete information cases. 

\textbf{Game-Theory-based:} The game-theory-based methods can be mainly divided into two types: non-cooperative game-based mechanism and Stackelberg game-based mechanism. In the non-cooperative game-based mechanism, each participant aims to maximize their own utility, and their strategies are decided simultaneously~\citep{Weng21ng_FedServing,Radanovic14ng,tahanian2021game}. 
On the other hand, the Stackelberg game-based mechanism, also known as the leader-follower game, operates under the assumption that the leader participants decide their strategies first based on the expected strategies of the follower participants. The followers then decide their strategies after observing the actions of the leaders~\citep{Yu18sg,Zhan20sg,Feng19sg,Pandey19sg,Hu20sg}.

In the cross-silo FL scenario, the design of an effective and efficient incentive mechanism is critical for encouraging the participation of high-quality institutions~\citep{pei2020survey,incentive2022survey}. However, only a few of the existing works focus on the incentive mechanism design in cross-silo FL. These studies are mainly based on the game theory to determine the best reward allocation scheme. In~\citep{Tang21cross_silo}, the authors design the incentive scheme for cross-silo FL from the public goods aspect, and formulate the mechanism design problem as a social welfare maximization problem. In~\citep{Song19Shapley}, the Shapley value is adopted to measure the contribution of the FL participants and allocate the profit generated by the trained FL model. In~\citep{Han22tokenCrossSilo}, the authors propose a tokenized incentive mechanism, which adopts the tokens to pay for the clients' training services at each FL training round. 


\section{Additional Experiment Results}
\subsection{Main Results on More Datasets}
The comparison results of Ours and GSP on FMNIST, FEMNIST, and CIFAR10 are shown in Figure \ref{fig:main_app}. We make several other observations. We observe an overtaken phenomenon, from the proposed approach to GSP, on trading volume of MNIST, FMNIST and CIFAR10. This indicates GSP-based allocation enhances the chance of obtaining high revenue models in the very first few rounds. This can be interpreted as GSP purely considers the bidding values of models, which is of advantage in leading to high revenue initially. However, when the market pattern is formalized, an optimized allocation function considering post-FL performance gain is more beneficial.

\label{app:exp_main_results}
\begin{figure*} [t]

\subfigure[Trading Volumes (FMNIST)]{
\centering
\includegraphics[width=0.3\linewidth]{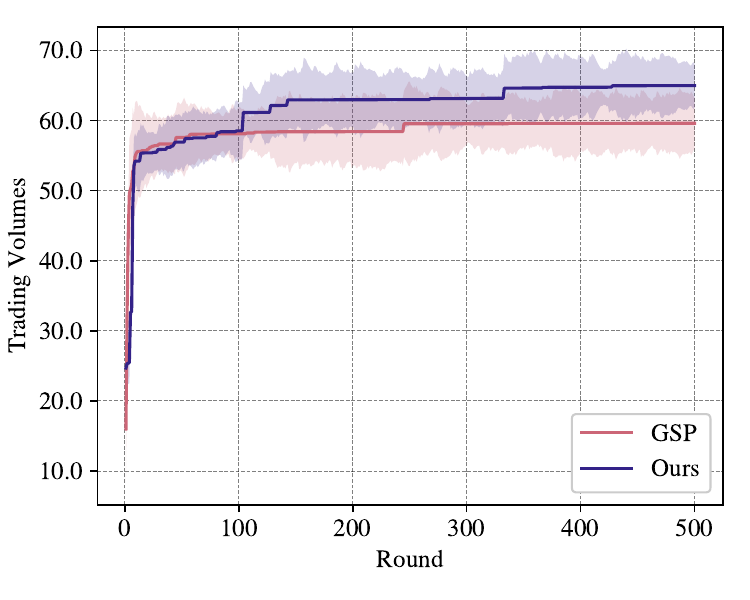}
}%
\subfigure[Trading Volumes (FEMNIST)]{
\centering
\includegraphics[width=0.3\linewidth]{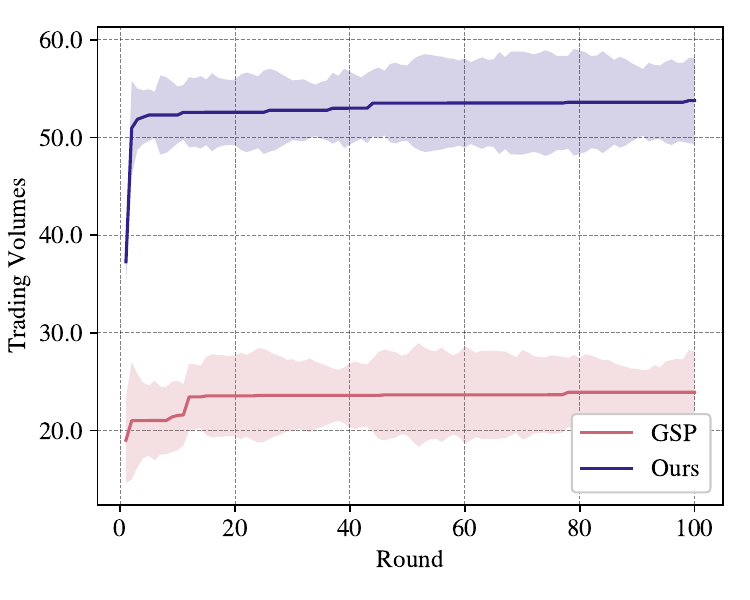}
}%
\subfigure[Trading Volumes (CIFAR10)]{
\centering
\includegraphics[width=0.3\linewidth]{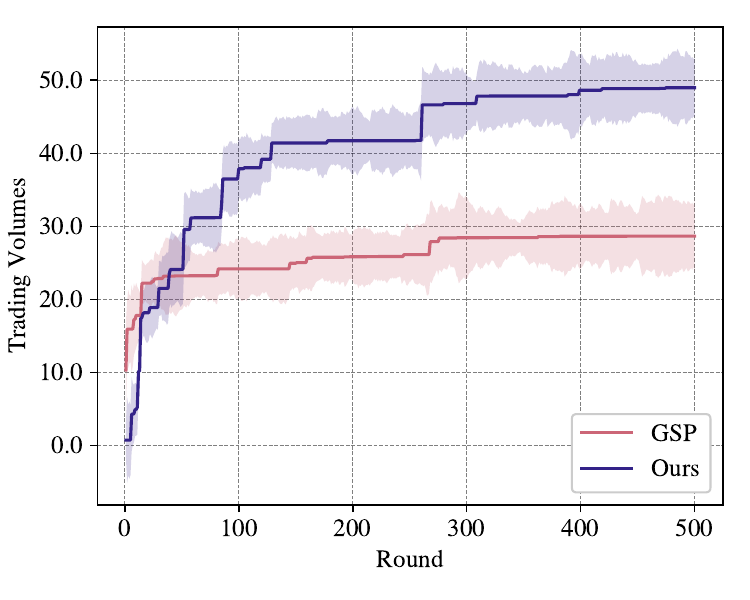}
}%
\centering
\\
\centering
\subfigure[Accuracy (FMNIST)]{
\centering
\includegraphics[width=0.3\linewidth]{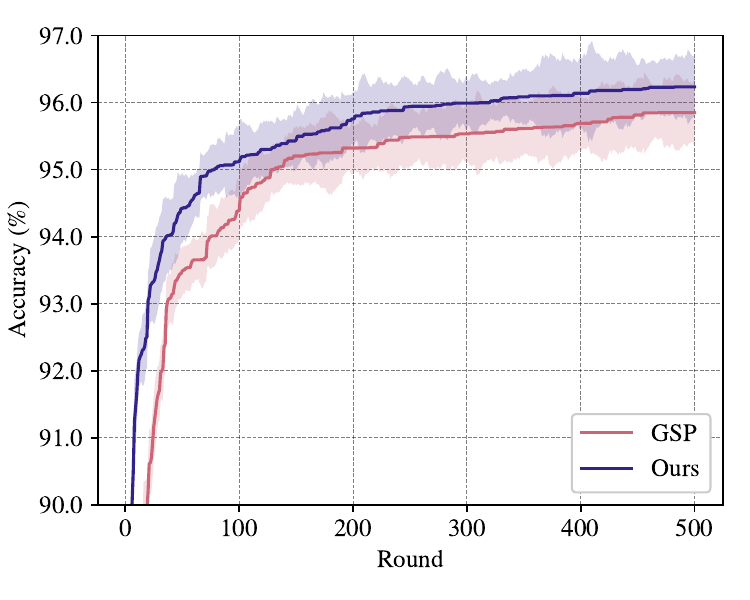}
}%
\subfigure[Accuracy (FEMNIST)]{
\centering
\includegraphics[width=0.3\linewidth]{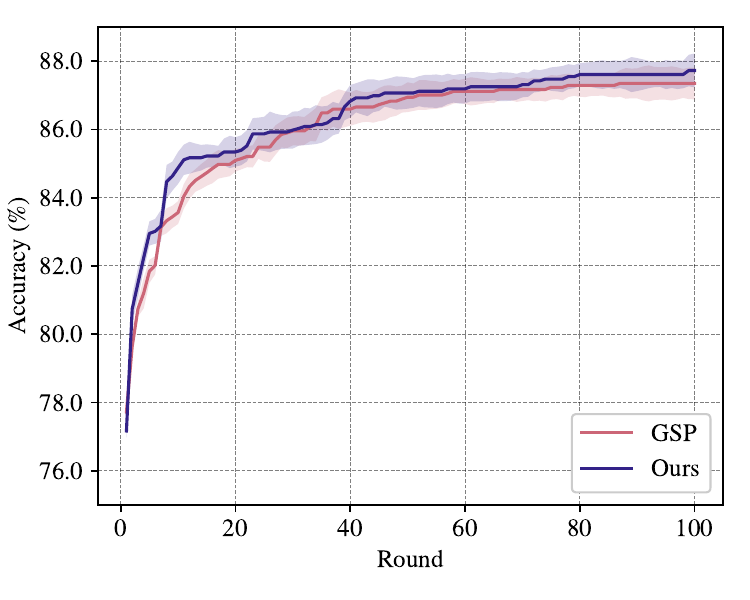}
}%
\centering
\subfigure[Accuracy (CIFAR10)]{
\centering
\includegraphics[width=0.3\linewidth]{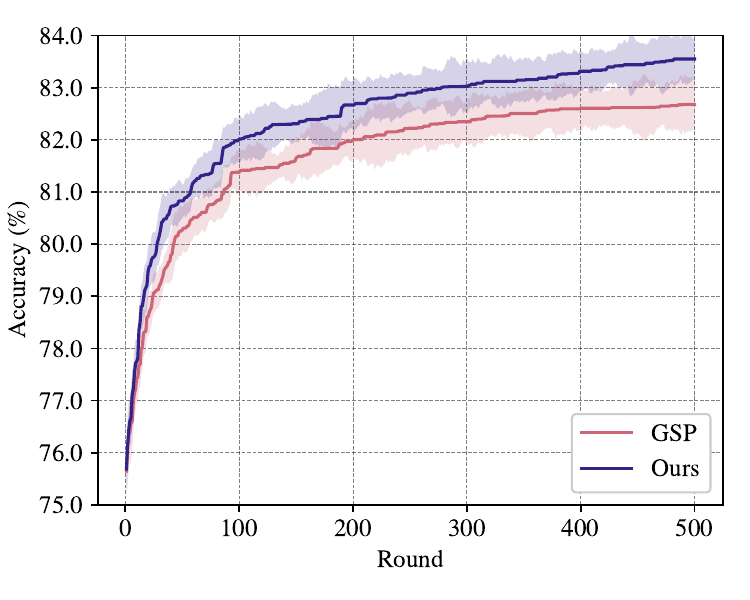}
}%
\vspace{-0.2cm}
\caption{Compare with the baseline on Trading Volumes and downstream task Accuracy.}
\vspace{-0.2cm}
\label{fig:main_app}
\end{figure*} 

\subsection{Effect of Bidding Strategies on More Datasets}
\label{app:exp_bid}
The experiment results of different bidding strategies on FMNIST, FEMNIST, and CIFAR10 are shown in Table \ref{tb:bid_more}. 

\begin{table}[]
\caption{Effect of bidding strategies.}
\fontsize{7}{7}\selectfont
\centering
\begin{tabular}{@{}c|c|c|ll@{}}
\toprule
Dataset                  & Method                & \begin{tabular}[c]{@{}c@{}}Bidding\\ Strategy\end{tabular} & \multicolumn{1}{c}{\begin{tabular}[c]{@{}c@{}}Accuracy\\ (\%)\end{tabular}} & \multicolumn{1}{c}{\begin{tabular}[c]{@{}c@{}}Trading\\ Volumes\end{tabular}} \\ \midrule
\multirow{6}{*}{FMNIST}  & \multirow{3}{*}{GSP}  & Stochastic                                                 & 95.85$\pm$0.24                                                              & 59.61$\pm$3.22                                                                \\
                         &                       & Greedy                                                     & 94.53$\pm$0.38                                                              & 66.33$\pm$5.94                                                                \\
                         &                       & $\epsilon$-Greedy                                          & 95.80$\pm$0.20                                                              & 62.41$\pm$4.42                                                                \\ \cmidrule(l){2-5} 
                         & \multirow{3}{*}{Ours} & Stochastic                                                 & 96.23$\pm$0.22                                                              & 64.98$\pm$3.31                                                                \\
                         &                       & Greedy                                                     & 94.88$\pm$0.26                                                              & 66.11$\pm$5.05                                                                \\
                         &                       & $\epsilon$-Greedy                                          & 95.98$\pm$0.19                                                              & 68.11$\pm$3.60                                                                \\ \midrule
\multirow{6}{*}{FEMNIST} & \multirow{3}{*}{GSP}  & Stochastic                                                 & 87.35$\pm$0.19                                                              & 23.92$\pm$4.36                                                                \\
                         &                       & Greedy                                                     & 87.42$\pm$0.00                                                              & 43.80$\pm$3.05                                                                \\
                         &                       & $\epsilon$-Greedy                                          & 86.86$\pm$0.13                                                              & 54.81$\pm$4.51                                                                \\ \cmidrule(l){2-5} 
                         & \multirow{3}{*}{Ours} & Stochastic                                                 & 87.73$\pm$0.16                                                              & 53.78$\pm$3.19                                                                \\
                         &                       & Greedy                                                     & 86.81$\pm$0.35                                                              & 69.43$\pm$3.50                                                                \\
                         &                       & $\epsilon$-Greedy                                          & 87.90$\pm$0.04                                                              & 64.35$\pm$1.53                                                                \\ \midrule
\multirow{6}{*}{CIFAR10} & \multirow{3}{*}{GSP}  & Stochastic                                                 & 82.71$\pm$0.09                                                              & 28.67$\pm$5.03                                                                \\
                         &                       & Greedy                                                     & 81.08$\pm$0.22                                                              & 15.18$\pm$2.81                                                                \\
                         &                       & $\epsilon$-Greedy                                          & 80.82$\pm$0.18                                                              & 27.48$\pm$2.51                                                                \\ \cmidrule(l){2-5} 
                         & \multirow{3}{*}{Ours} & Stochastic                                                 & 83.55$\pm$0.24                                                              & 48.99$\pm$4.21                                                                \\
                         &                       & Greedy                                                     & 81.11$\pm$0.16                                                              & 58.76$\pm$3.34                                                                \\
                         &                       & $\epsilon$-Greedy                                          & 81.62$\pm$0.30                                                              & 35.34$\pm$3.73                                                                \\ \bottomrule
\end{tabular}
\label{tb:bid_more}
\end{table}

\subsection{Hyper-parameter Analysis}
\label{app:exp_hyper_param}
\begin{table*}[t]
\centering
\caption{Effect of buyer utility and the ratio of authorized sellers.}
\fontsize{8}{9}\selectfont

\setlength{\tabcolsep}{1mm}{
\begin{tabular}{@{}c|c|c|cccccccc@{}}
\toprule
\multirow{2}{*}{Dataset} & \multirow{2}{*}{\begin{tabular}[c]{@{}c@{}}Utility\\ Distribution \\ of Buyers\end{tabular}} & $k=xN$ & \multicolumn{2}{c}{$x=20\%$}                                               & \multicolumn{2}{c}{$x=40\%$}                                               & \multicolumn{2}{c}{$x=60\%$}                                               & \multicolumn{2}{c}{$x=80\%$}                                              \\ \cmidrule(l){3-11} 
                         &                                                                                              & Method & Accuracy       & \begin{tabular}[c]{@{}c@{}}Trading\\ Volumes\end{tabular} & Accuracy       & \begin{tabular}[c]{@{}c@{}}Trading\\ Volumes\end{tabular} & Accuracy       & \begin{tabular}[c]{@{}c@{}}Trading\\ Volumes\end{tabular} & Accuracy       & \begin{tabular}[c]{@{}c@{}}Trading\\ Volumes\end{tabular} \\ \midrule
\multirow{4}{*}{MINIST}  & \multirow{2}{*}{$u(0,1)$}                                                                    & GSP    & 97.69$\pm$2.22 & 240.32$\pm$0.12                                           & 97.72$\pm$1.57 & 96.84$\pm$0.15                                            & 97.24$\pm$2.32 & 70.04$\pm$0.17                                            & 96.76$\pm$1.85 & 42.86$\pm$0.17                                           \\
                         &                                                                                              & Ours   & 97.99$\pm$2.52 & 183.78$\pm$0.14                                           & 98.04$\pm$1.60 & 113.90$\pm$0.35                                           & 98.04$\pm$2.23 & 69.99$\pm$0.09                                            & 97.79$\pm$2.04 & 48.64$\pm$0.15                                           \\
                         & \multirow{2}{*}{$|\mathcal{N}(0,1)|$}                                                        & GSP    & 97.73$\pm$2.29 & 95.35$\pm$0.27                                            & 98.04$\pm$2.36 & 72.63$\pm$0.13                                            & 97.34$\pm$2.14 & 41.24$\pm$0.24                                            & 97.11$\pm$2.05 & 54.41$\pm$0.08                                           \\
                         &                                                                                              & Ours   & 97.88$\pm$1.69 & 208.85$\pm$0.27                                           & 97.92$\pm$1.53 & 99.74$\pm$0.34                                            & 98.06$\pm$1.75 & 124.28$\pm$0.36                                           & 97.69$\pm$1.29 & 101.61$\pm$0.22                                          \\ \midrule
\multirow{4}{*}{FMNIST}  & \multirow{2}{*}{$u(0,1)$}                                                                    & GSP    & 96.07$\pm$1.25 & 260.73$\pm$0.27                                           & 96.50$\pm$0.98 & 73.25$\pm$0.31                                            & 96.12$\pm$1.96 & 54.07$\pm$0.33                                            & 96.20$\pm$1.60 & 58.15$\pm$0.17                                           \\
                         &                                                                                              & Ours   & 96.38$\pm$1.33 & 302.43$\pm$0.06                                           & 96.10$\pm$1.84 & 91.43$\pm$0.23                                            & 96.15$\pm$2.23 & 83.34$\pm$0.15                                            & 96.38$\pm$1.24 & 58.80$\pm$0.22                                           \\
                         & \multirow{2}{*}{$|\mathcal{N}(0,1)|$}                                                        & GSP    & 96.24$\pm$2.35 & 296.88$\pm$0.23                                           & 95.89$\pm$1.99 & 69.27$\pm$0.27                                            & 95.95$\pm$1.75 & 54.37$\pm$0.11                                            & 95.78$\pm$1.63 & 38.61$\pm$0.19                                           \\
                         &                                                                                              & Ours   & 96.20$\pm$1.94 & 382.25$\pm$0.05                                           & 96.22$\pm$2.81 & 92.23$\pm$0.05                                            & 95.66$\pm$2.23 & 80.74$\pm$0.25                                            & 96.23$\pm$2.45 & 88.71$\pm$0.13                                           \\ \midrule
\multirow{4}{*}{FEMNIST} & \multirow{2}{*}{$u(0,1)$}                                                                    & GSP    & 89.21$\pm$1.89 & 59.13$\pm$0.21                                            & 89.12$\pm$1.54 & 40.56$\pm$0.15                                            & 88.95$\pm$2.04 & 16.02$\pm$0.38                                            & 88.67$\pm$1.87 & 14.85$\pm$0.09                                           \\
                         &                                                                                              & Ours   & 89.26$\pm$1.89 & 84.15$\pm$0.17                                            & 89.04$\pm$2.69 & 20.10$\pm$0.07                                            & 88.35$\pm$1.37 & 38.02$\pm$0.05                                            & 88.98$\pm$2.47 & 16.94$\pm$0.18                                           \\
                         & \multirow{2}{*}{$|\mathcal{N}(0,1)|$}                                                        & GSP    & 88.72$\pm$1.62 & 104.47$\pm$0.30                                           & 87.77$\pm$1.77 & 59.51$\pm$0.07                                            & 89.01$\pm$2.00 & 18.74$\pm$0.21                                            & 88.03$\pm$2.55 & 12.38$\pm$0.37                                           \\
                         &                                                                                              & Ours   & 88.93$\pm$2.40 & 80.86$\pm$0.04                                            & 88.60$\pm$1.52 & 82.61$\pm$0.13                                            & 88.03$\pm$2.41 & 32.93$\pm$0.05                                            & 88.61$\pm$1.85 & 44.16$\pm$0.11                                           \\ \midrule
\multirow{4}{*}{CIFAR10} & \multirow{2}{*}{$u(0,1)$}                                                                    & GSP    & 84.18$\pm$1.48 & 101.21$\pm$0.33                                           & 83.56$\pm$2.42 & 45.32$\pm$0.27                                            & 83.05$\pm$2.66 & 25.54$\pm$0.19                                            & 82.17$\pm$1.68 & 15.45$\pm$0.15                                           \\
                         &                                                                                              & Ours   & 83.16$\pm$2.09 & 119.51$\pm$0.24                                           & 84.04$\pm$1.86 & 47.96$\pm$0.15                                            & 83.40$\pm$2.68 & 35.66$\pm$0.32                                            & 82.45$\pm$1.74 & 16.69$\pm$0.08                                           \\
                         & \multirow{2}{*}{$|\mathcal{N}(0,1)|$}                                                        & GSP    & 84.26$\pm$2.57 & 127.80$\pm$0.17                                           & 82.73$\pm$2.13 & 57.28$\pm$0.13                                            & 82.88$\pm$1.63 & 29.81$\pm$0.17                                            & 82.78$\pm$2.33 & 18.12$\pm$0.21                                           \\
                         &                                                                                              & Ours   & 83.10$\pm$2.07 & 195.93$\pm$0.10                                           & 83.54$\pm$2.42 & 73.05$\pm$0.04                                            & 83.44$\pm$1.83 & 52.14$\pm$0.02                                            & 83.05$\pm$2.56 & 25.92$\pm$0.14                                           \\ \bottomrule
\end{tabular}}

\label{tb:param_s}

\end{table*}

\begin{figure*}[h]
\centering
\subfigure[Final Performance]{
\centering
\includegraphics[width=0.31\linewidth]{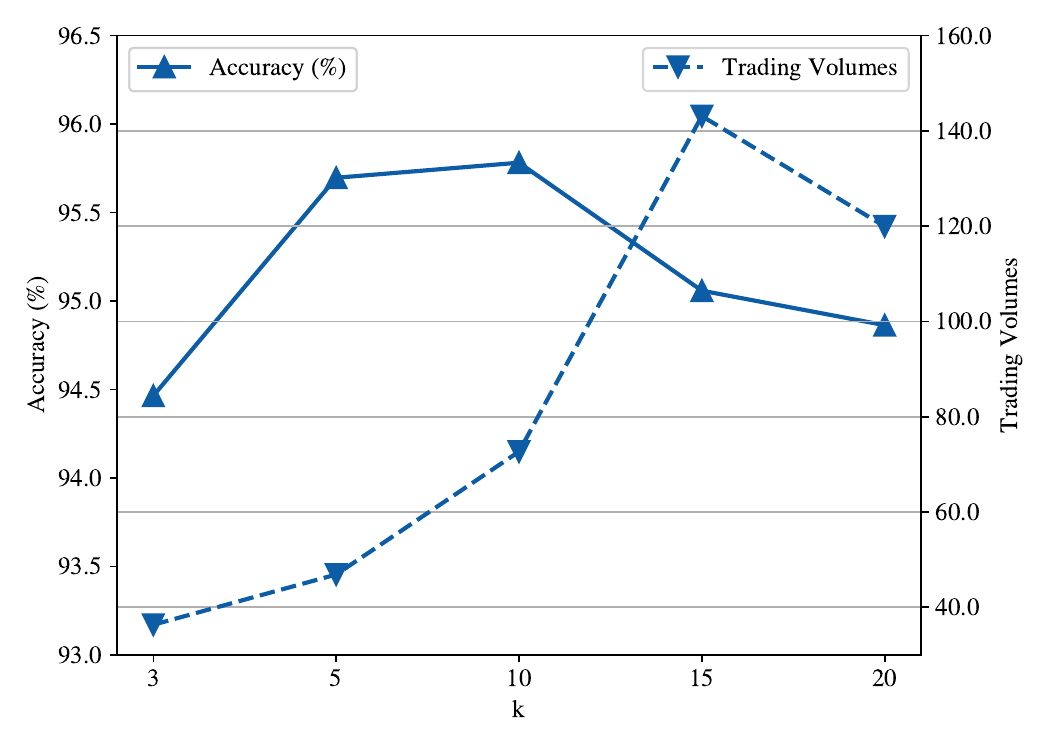}
}%
\subfigure[Accuracy]{
\centering
\includegraphics[width=0.28\linewidth]{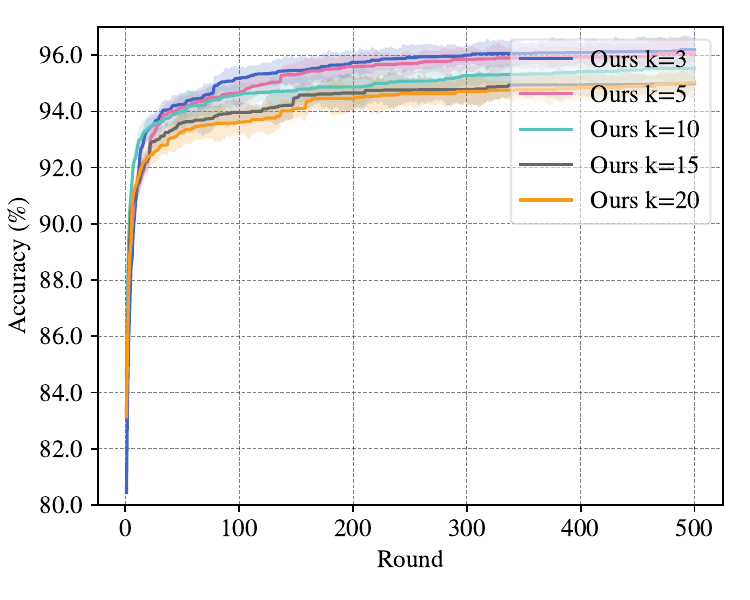}
}%
\subfigure[Trading Volumes]{
\centering
\includegraphics[width=0.285\linewidth]{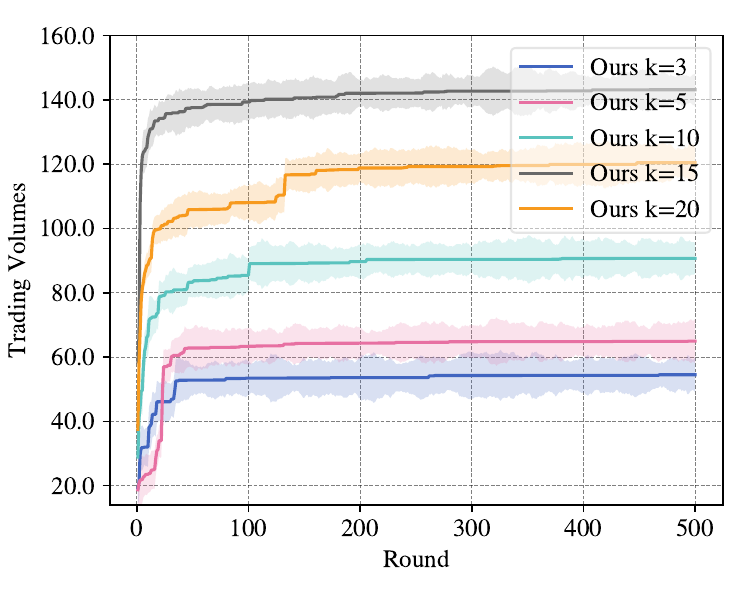}
}%
\centering
\vspace{-0.1in}
\caption{Effect of the number of model copies each seller can sell.}
\vspace{-0.1in}
\label{fig:param_k}
\end{figure*}

\textbf{Effect of Buyer Utility and the Ratio of Authorized Sellers.}
In Assumption \ref{ass:ir}, we assume each buyer client has a self-maintained utility rate $u_i$. We are interested in how the distribution of the value of $u_i$ affects the auction results. To evaluate this, we run experiments on two distributions: 1) $u(0,1)$, the uniform distribution between 0 and 1, and 2) $|\mathcal{N}(0,1)|$, the absolute value of normal distribution with the mean of 0 and std of 1. Moreover, recall that based on Theorem \ref{th:curse}, we propose to authorize a random subset of sellers for selling in each round. To evaluate how the ratio of active sellers influences the auction results, we run experiments on when $\{20\%,40\%,60\%,80\%\}$ of all sellers are authorized for selling.

In Table \ref{tb:param_s}, the experiment results are presented under the two groups of parameters on three datasets. The table shows the cumulative revenue and accuracy of the proposed method (Ours) and the comparison baseline (GSP). We make the following key observations. First, the results indicate that the proposed method is more effective in generating higher cumulative revenue and maintaining accuracy compared to the GSP baseline across different buyer utility distributions and authorized seller ratios. Second, the market reaches higher cumulative revenue at utility distribution $|\mathcal{N}(0,1)|$ compared with $u(0,1)$. This trend is more significant on MNIST and FMNIST datasets. On these datasets, the downstream task is simple and thus less performance gain is expected. In this case, since the auction setting aims to capture and exploit the variations in buyer preferences, a market with heterogeneous utility leads to improved auction outcomes. Third, though the performance of the market varies with different ratios of authorized sellers on both methods, the proposed method is relatively stable across different active seller ratios compared with GSP. 

\textbf{Effect of Number of Model Copies.} 
In Theory~\ref{th:copy}, we propose to allow only $k$ ($k<N$) model copies for sale, where $k$ should be pre-defined before the auction. To further evaluate how the value of $k$ affects the market, especially the auction results and training of RL-based solver, we run experiments with $k=\{3,5,10,15,20\}$.

The results on FMNIST dataset are plotted in Figure~\ref{fig:param_k}, (a) illustrates the final performance after convergebce, Figure (b) and Figure (c) illustrate the learning curve of the FL training between round 0-500. We can observe from Figure~\ref{fig:param_k} (a) that accuracy drops as $k$ goes larger. Similar trends are observed on other datasets. To avoid redundancy, we omit them here. The increase of $k$ indicates the increased probability of successful bidding, introducing more dynamics to the marketplace and making it more challenging for the policy net to find an optimal allocation strategy. This also explains the fluctuation of revenue as $k$ increases. A larger $k$ increases the total trading volumes (revenue first increases) but harms the allocation effectiveness (revenue then decreases), and when $k$ is too large, the latter's effect diminishes (revenue increases again). Another finding is that $k$ also affects the convergence speed of FL training. As shown in Figure~\ref{fig:param_k} (b) and Figure~\ref{fig:param_k} (c), a relative smaller $k$ ($k\leq$10) significantly speeds up convergence. However, it may not converge to a good performance. Therefore, practitioners should carefully decide on $k$, considering the trade-off between final accuracy and revenue, as well as convergence speed. A relatively medium number is a fair choice.


\end{document}